\newtheorem{assumption}{Assumption}
\title[Smoothening Effect of Noise in the Reverse Process of SGMs]{Noise in the reverse process improves the approximation \\ capabilities of diffusion models}
\author{%
	\Name{Karthik Elamvazhuthi}$^1$ \Email{karthike@ucr.edu}\\
	\Name{Samet Oymak}$^{2}$ \Email{oymak@umich.edu}\\
	\Name{Fabio Pasqualetti}$^1$ \Email{fabiopas@engr.ucr.edu}\\
	\addr 1. 
	University of California, Riverside \\
	\addr 2. University of Michigan
}
\begin{document}
	
	\maketitle
	
	\begin{abstract}%
		In Score based Generative Modeling (SGMs), the state-of-the-art in generative modeling, stochastic reverse processes are known to perform better than their deterministic counterparts. This paper delves into the heart of this phenomenon, comparing neural ordinary differential equations (ODEs) and neural stochastic differential equations (SDEs) as reverse processes. We use a control theoretic perspective by posing the approximation of the reverse process as a trajectory tracking problem. We analyze the ability of neural SDEs to approximate trajectories of the Fokker-Planck equation, revealing the advantages of stochasticity. First, neural SDEs exhibit a powerful regularizing effect, enabling $L^2$ norm trajectory approximation surpassing the Wasserstein metric approximation achieved by neural ODEs under similar conditions, even when the reference vector field or score function is not Lipschitz. Applying this result, we establish the class of distributions that can be sampled using score matching in SGMs, relaxing the Lipschitz requirement on the gradient of the data distribution in existing literature. Second, we show that this approximation property is preserved when network width is limited to the input dimension of the network. In this limited width case, the weights act as control inputs, framing our analysis as a controllability problem for neural SDEs in probability density space. This sheds light on how noise helps to steer the system towards the desired solution and illuminates the empirical success of stochasticity in generative modeling.
	\end{abstract}
	
	\begin{keywords}%
		Diffusion Models; Neural Stochastic Differential Equations; Universal Approximation; Controllability %
	\end{keywords}

	\section{INTRODUCTION}

	Generative modeling, an important tool in machine learning, addresses the challenge of drawing new samples from an unknown data distribution when provided with a set of data samples. Among the cutting-edge techniques in this domain are Score Based Generative Models (SGMs) \cite{ho2020denoising,song2020score,yang2022diffusion}, which hold promising applications in diverse fields such as medical imaging \cite{chung2022score}, path planning \cite{yang2022diffusion}, and shape generation \cite{zhou20213d}. 
	
	SGMs employ two distinct processes: a forward process that gradually transforms the data distribution into a noise distribution, and a reverse process that retraces the trajectories of the forward process in reverse, effectively mapping the noise distribution back to the data distribution. Multiple choices exist for the reverse process. One can opt for a deterministic reverse process, known as the probabilistic flow ODE, or a stochastic reverse process. It has been observed in practice that \cite{song2020score} that stochastic reverse process performs better than the deterministic one. This paper delves into the implications of choosing between these two approaches from the standpoint of trajectory approximation. The two possible choices of reverse processes are two instances of neural ODEs and SDEs respectively. This papers examines the ability neural SDEs to approximate trajectories of vector-fields arising in SGMs. We additionally look at the capability of limited width networks, to emphasize the role of stochasticity in improving approximation capabilities in generative modeling. 
	
	The approximation capabilities of limited width networks have been a subject of independent research since the problem becomes equivalent to a controllability problem. Hence, one can use control theoretic tools to understand approximation properties of neural networks. For instance, \cite{tabuada2022universal} explored the approximation capabilities of limited width residual neural networks (Resnets) from the perspective of control theory, focusing on constructing approximations of maps. Similarly, \cite{ruiz2023neural,ruiz2023control} showed the universal approximation properties of limited width neural ODEs for density estimation. In \cite{elamvazhuthi2022neural}, it is shown that one can control the continuity equation corresponding to the neural ODE to the solution of a continuity equation of an arbitrary Lipschitz vector-field, in the Wasserstein metric.
	
	We use a similar control theoretic methodology in this paper for the case of neural SDEs. We show that using neural SDEs one can approximate trajectories of Fokker-Planck equation uniformly in the $L^2$ norm. Of special interest is the case of Fokker-Planck equations arising from score matching algorithm of SGMs. In this context, we characterize the class of distributions that can be sampled with this approach, achieving a stronger norm approximation than previously possible with deterministic models. This implies that neural SDEs offer improved sampling capabilities as the reverse process in SGMs. While \cite{tzen2019theoretical} establish general universal approximation properties of neural SDEs, they do not consider the general trajectory approximation problem, as relevant to score matching. Additionally, their results relied on arbitrary network widths and approximation in the weaker Kullback–Leibler divergence divergence, rather than the $L^2$ norm. Furthermore, we eliminate the assumption of Lipschitz continuity on the data distribution's gradients, previously imposed by \cite{tzen2019theoretical}, and others analyzing SGM sampling capabilities \cite{chen2023sampling,de2022convergence}. By considering diffusion processes on bounded domains, we are able to additionally allow for data distributions with compact connected supports. This expands the range of data distributions that can be sampled using neural SDEs.
	
	The two main contributions of the paper can be summarized as follows
	i) Characterizing the class of distributions that can achieve $L^2$ norm closeness between densities with score approximation  ii) Universal Approximation capabilities of Neural SDEs for the same distributions for the case with limited width.

	The paper is organized in the following way. In Section \ref{sec:prob}, we formulate the problem addressed in this paper. In Section \ref{sec:not} we present some notation and assumptions for the results presented in the paper later. In Section \ref{sec:analysis}, we present the controllability analysis and our main results on universal approximation with extra regularity. 
	
	\section{Problem Formulation and Motivation}
	\label{sec:prob}

	SGM is a technique to draw new samples from a data distribution based on available samples. In this section, we reinterpret SGM from a control theoretic perspective and then present the problem addressed in this paper. The SGM algorithm involves two differential equations (SDEs),  as presented in \cite{song2020score}. First, we have the {\it forward process}, which is a stochastic differential equation, for example, 
	\begin{eqnarray} 
	\label{eq:sde}
	dX = g(X)dt + \sqrt{2} d \mathbf{W}+d\psi(t)\nonumber\\
	X_0 \sim \rho_d 
	\end{eqnarray}
	where $\mathbf{W}(t)$ is the standard $d$-dimensional Brownian motion and $\psi$ is a stochastic process that ensures that the process remains confined to some domain $\Omega \subset \mathbb{R}^d$ with boundary $\partial \Omega$. The vector field $g:\mathbb{R}^d \rightarrow \mathbb{R}^d$ is chosen such that the probability density of the random variable $X_t$ converges to a distribution $\rho_{n}$, from which one can easily sample and referred to as the {\it noise distribution}. This is guaranteed by analysing the behavior of the Fokker Planck equation which governs the evolution of $\rho_t$ given by
	\begin{eqnarray}
	\label{eq:fwdpdf}
	&\frac{\partial \rho}{\partial t} = \Delta \rho -\nabla \cdot (g(x) \rho) ~  &\mbox{on} ~[0,T] \times \Omega  \\ \nonumber
	&\vec{n}(x)\cdot ( \nabla\rho_t(x) +g(x)) = 0 ~~~ &\mbox{on} ~\partial \Omega\\ \nonumber 
	&\rho_0 = \rho_d ~~~  &\mbox{on} ~\Omega.
	\end{eqnarray}
	where $\vec{n}(x)$ is the unit vector normal to the boundary of the domain $\partial \Omega$. 
	This boundary condition ensures that  $\int_{\mathbb{R}^d} \rho_t(x)dx = 1$ for all $t\geq 0$.  For the choice $g(x) = -\nabla \log \rho_{n}$ we can show that $\lim_{ t \rightarrow \infty } \rho_t = \rho_{n}$. Since the domain $\Omega$ is bounded we can choose the noise distribution to be the uniform distribution, in which case $f \equiv 0$. Then we can rewrite the above equation as
	\begin{eqnarray}
	\label{eq:probodepdf}
	&\frac{\partial \rho^f}{\partial t} = \Delta \rho^f  = \nabla \cdot  ([\nabla \log \rho^f ] \rho^f) ~  &\mbox{on} ~[0,T] \times \Omega \nonumber \\
	&\vec{n}(x)\cdot  \nabla\rho^f(t,x)  = 0 ~~~ &\mbox{on} ~\partial \Omega \nonumber \\
	&\rho^f(0) = \rho_d ~~~  &\mbox{on} ~\Omega
	\end{eqnarray}
	
	In order to sample from $\rho_d$, fixing $T>0$, one can sample from the noise distribution $\rho_{n}$ and run the {\it reverse process} that approximately has the same probability distribution as $\rho^f_{T-t}$. There are multiple possible choices of the reverse process. One candidate choice, is the {\it probabilistic flow ODE} given by,
	\begin{eqnarray} 
	\label{eq:probode}
	dX_r = \nabla  \log \rho^f_{T-t}dt +d\psi(t)  \nonumber \\
	X_0 \sim \rho_{n}
	\end{eqnarray}
	where $\rho_n$ is the noise distribution that is easy to sample from, and the density of $X_r \sim \rho^r_t = \rho^f_{T-t}$, evolving according to,
	\begin{eqnarray}
	\label{eq:revpde}
	&\frac{\partial \rho^r}{\partial t} =-([\nabla \log \rho^r ] \rho^r) 
	\end{eqnarray}
	However, in practice one does not have complete information about the {\it score}: $\nabla \log \rho^f_{T-t}$. Therefore,  a neural network $s(t,x,\theta)$ with weight parameters $\theta$ is used to approximate this quantity.
	
	This objective is to ensure that the solution $\rho^\theta_t$ of the equation 
	\begin{eqnarray}
	\frac{\partial \rho^\theta_t }{\partial t} = \nabla \cdot  ( [ s(t,x,\theta) ] \rho^\theta_t ) \nonumber \\
	\rho^\theta_0 = \rho_n \nonumber
	\end{eqnarray}
	is close to $\rho^f_{T-t}$ so that we can sample from $\rho_d$ by running the approximating reverse ODE,
	\begin{eqnarray} 
	dX = -s(t,X,\theta)dt +d\psi(t) \nonumber \\
	X_0 \sim \rho_{n}
	\end{eqnarray}
	The neural network $s(t,x,\theta)$ used to approximate the score is identified by solving the the optimization problem,
	\begin{eqnarray} 
	\label{eq:scorlos}
	\min_{\theta}  & &\int_0^T \mathbb{E}_{\rho^f_{T-t}} |s(t,\cdot,\theta) -\nabla \log \rho^f_{T-t} |^2dt \nonumber \\
	&=& \int_0^T \int_{\Omega}|s(t,x,\theta) -\nabla \log \rho^f_{T-t} |^2\rho^f_{T-t}(x)dxdt 
	\end{eqnarray}
	The choice of the reverse process is not unique. In fact, one can choose the alternative stochastic reverse process which as the same density evolution as \eqref{eq:revpde}.
	\begin{eqnarray} 
	\label{eq:probsde}
	dX_r = 2\nabla \log \rho^f_{T-t}dt +\sqrt{2} d\mathbf{W} + d\psi(t)\nonumber \\
	X_0 \sim \rho_{n}
	\end{eqnarray}
	In this case the approximating reverse process used to sample from $\rho_d$ is then given by
	\begin{eqnarray} 
	\label{eq:approxrevsde}
	dX_r = -s(t,x,\theta)dt +\sqrt{2 } d\mathbf{W} +d\psi(t) \nonumber \\
	X_0 \sim \rho_{n}
	\end{eqnarray}
	
	One can view the above problem as a trajectory tracking problem from the perspective of control theory. We have a reference trajectory $\rho^f_t$ in space of probability densities, generated by the equation \eqref{eq:probodepdf}, and we can think of the neural network  $\theta$ as control parameters that are to be tuned to make the solution of \eqref{eq:probodepdf}.
	There are two questions that the above choice of reverse processes: i) What is the benefit of noise for trajectory tracking when comparing \eqref{eq:probode} and \eqref{eq:probsde} ii) What can we say about the expressivity capabilities? 
	
	We first consider these two questions for general approximating classes of functions. Then, we consider these two questions for a particular choice of $s(t,x,\theta)$. When the right-hand side of the neural SDE has {\it limited width}, that is the number of input nodes for each time have width equal to the dimension of the sample space. Toward this end,
	let $\sigma :\mathbb{R} \rightarrow \mathbb{R}$ be a given {\it activation function}. 
	Let the function $\Sigma : \mathbb{R} \rightarrow \mathbb{R}$ be given by
	\[\Sigma(x) = [\sigma(x_1),...,\sigma(x_d)]^T\]
	We consider the associated {\it neural stochastic differential equation}, defined by 
	\begin{eqnarray}
	dX= A(t)\Sigma(W(t)X+B(t)) + \sqrt{2}d \mathbf{W}(t)+d\psi
	\label{eq:nsde}
	\end{eqnarray}
	where $A: [0,T] \rightarrow \mathbb{R}^{d \times d}$, $W: [0,T] \rightarrow  \mathbb{R}^d$ and  $B : [0,T] \rightarrow \mathbb{R}^d$ are the control inputs or weights for the neural network. 
	Suppose that the initial condition $x(0)$ of the neural ODE \eqref{eq:node} is random and represented by a probability density function $\rho_0$, that is, $P(X(0) \in \Omega) = \int_{\Omega} \rho_0(x)dx$. Then the uncertainty in the location is $x(t)$ is given by time dependent probability density $\rho_t$ which evolves according to the Fokker-Planck equation
	
	\begin{eqnarray}
	\label{eq:neurtra}
	\frac{\partial \rho}{\partial t} = \Delta \rho - \nabla \cdot \big(v_t(x)\rho \big)   \\ \nonumber
	\rho_0= \rho_0 \\ \nonumber
	v_t(x)=(A(t)\Sigma(W(t)x+\theta(t)))
	\end{eqnarray}

	The control problem that we address in this paper is the following: {\it 
		Given a curve on the set of probability densities $t  \mapsto \rho_t$ that is the solution of the following Fokker-Planck equation 
		\begin{eqnarray}
		\label{eq:genPde}
		\frac{\partial \rho}{\partial t} = \Delta \rho -\nabla \cdot \big(V_t(x)\rho \big)    
		\end{eqnarray}
		can we construct weights $A(t),W(t), b(t)$ such that the solution of \eqref{eq:neurtra} is arbitrary close to $\rho_t$ in a suitable sense, for all $t \in [0,T]$?}
	
	Note that the trajectory tracking problem pertaining to SGMs is a special instance of the above problem for the choice $V = \nabla \log \rho = \frac{\nabla \rho}{\rho}$. For deterministic case, this problem has been affirmatively addressed in \cite{elamvazhuthi2022neural}, where it was shown that this approximation can be achieved in the {\it Wasserstein metric}, in which case the analysis is performed on properties of the {\it neural ODE} 
	\begin{eqnarray}
	\dot{x}(t) = A(t)\Sigma(W(t)x+B(t))
	\label{eq:node}
	\end{eqnarray}
	and instead of the Fokker-Planck equation one studies the approximation properties of the continuity equation
	\begin{eqnarray}
	\label{eq:ctyeq}
	\frac{\partial \rho}{\partial t} =-\nabla \cdot \big(V_t(x)\rho \big)    
	\end{eqnarray}
	which is a general instance of equation \eqref{eq:probodepdf} in SGMs.
	In this paper we consider the approximation capabilities of neural SDEs which have noise in the process, and show that one can in fact achieve approximation in $L^2$ norm, which is stronger than the Wasserstein metric. Moreover, this can be achieved using the same procedure as used in the deterministic case. Hence, showing that noise improves the approximation capability of neural SDEs. The Kullback-Leibler (KL) divergence is another measure of distance between probability measures that is commonly used in practice. However the KL divergence is also known to be weaker than the $L^2$ norm, and in fact, we have (see \cite{gibbs2002choosing}) the following chain of inequalities
	
	\begin{equation}
	W_1(p,q) \leq C' KL(p,q) \leq C \|p-q\|^2_2
	\end{equation}
	for all probability measures $p$ and $q$ that have a common support, for some constant $C,C'>0$. Here $W_1(\cdot,\cdot)$ denotes the $1-$Wasserstein distance between probability measures, $KL(\cdot, \cdot)$ denotes the KL divergence, and $\|\cdot\|_2^2$ denotes the $L^2$ norm between functions. 
	
	To clarify why we cannot achieve approximation of solutions in the deterministic case \eqref{eq:ctyeq}, using standard neural network approximation objectives, in the $L^2$ norm we consider the simple one dimensional case. In this situation, the solution of
	\eqref{eq:ctyeq} can be represented by $\rho_t(x) = \rho(\phi^{-1}_{V,t}(x))|\partial_x \phi^{-1}_{V,t}(x))|$, where $\phi_{V,t}$ is the flow map corresponding to $V$.  Therefore, even if the distance between an approximating vector-field $V^n$ and $V$ is small in $L^2$ (or even stronger $L^{\infty}$ norm), there is no way to control the  
	value of $\rho^n_t(x)$ which depends on the derivative $\partial_x \phi^{-1}_{V^n,t}(x)$, which in turn depends on the Lipschitz constant of $V^n_t$. Therefore, the approximating densities can behave in a very irregular way in the $L^2$ norm. However, as we will show in this paper, this problem does not arise in the stochastic case, due to the regularization effect of noise. One can in fact, relax the Lipschitz assumption on the reference vector fields as well. Moreover, the convergence can be achieved in an average senese in the $H^1$ norm.
	\section{Notation and Preliminaries}
	\label{sec:not}
	We now define some mathematical terms that are used in later sections.  We define $L^2(\Omega)$ as the space of square integrable functions over $\Omega$, where $\Omega \subset \mathbb{R}^d$ is an open, bounded and connected subset of a Euclidean domain of dimension  with a $C^2$ boundary. The standard inner product on $L^2(\Omega)$ will be denoted by $\langle \cdot , \cdot \rangle_{2}$, given by
	$
	\langle f, g\rangle_{2} = \int_{\Omega}  f(\mathbf{x})g(\mathbf{x})d\mathbf{x}
	$
	for each $f,g \in L^2(\Omega)$. The norm $\|\cdot\|_{2}$ on the space $L^2(\Omega)$ is defined as
	$
	\|f\|_{2} = \langle f, f\rangle^{1/2}_{2}
	$
	for all $f \in L^2(\Omega)$. We define the Sobolev space $H^1(\Omega)$ the set of $L^2$ functions with weak derivatives in $L^2(\Omega) $.
	The space $L^\infty((0,T);\Omega)^d$ is the set of essentially bounded vector fields. 
	The set $\mathcal{P}(\Omega)$ will denote the set of Borel probability measures on $\Omega$. 
	The space $W^{1,\infty}(\Omega)$ is the set of functions in $L^{\infty}(\Omega)$ with weak derivatives in $L^{\infty}(\Omega)$. If $f,g \in L^{\infty}(X)$, then $\|f\|_{2,g} := \int_X |f(x)|^2g(x)dx$ is the weighted $2$-norm with weight $g$. 
	
	In addition to this, we will need some mild assumptions on the activation function $\sigma:\mathbb{R}\rightarrow \mathbb{R}$.
	For this purpose, let us define the set of functions  
	\begin{align*}
	\mathcal{F}=\bigcup_{m\in \mathbb{Z}_+}\{ \sum_{i=1}^m \alpha_i \sigma(w_i^Tx+b_i) \ | \ \alpha_i \in \mathbb{R}, w_i \in \mathbb{R}^d, b_i \in \mathbb{R}\}.
	\end{align*}

	\begin{assumption}
		\label{asmp:neura}
		We make the following assumptions:
		\begin{enumerate}
			\item \textbf{(Regularity)} The activation function $\sigma$ is globally Lipschitz, that is, there exists $K>0$ such that 
			\begin{equation}
			|\sigma (x) - \sigma (y)| \leq K|x-y|, 
			\label{asmp:neura1}
			\end{equation}
			for all $x,y\in \mathbb{R}$.
			\item \textbf{(Density of superpositions)} The set of functions  $ \mathcal{F}$
			is dense in $C(\mathbb{R}^d;\mathbb{R})$ in the uniform norm topology on compact sets. Particularly, given a function $f \in C(\mathbb{R}^d;\mathbb{R})$, for each compact set $\Omega \subset \mathbb{R}$ and $\delta>0$, there exists a function $g \in \mathcal{F}$ such that 
			\[\sup_{x\in \Omega} |f(x) -g(x)| <\delta.\]     \label{asmp:neura2}
		\end{enumerate}
	\end{assumption} 
	
	Note that the set $\mathcal{F}$ is the set of arbitrarily wide neural networks. It is well-known that the Logistic function  and the ReLU function satisfy the density property, see~\cite{cybenko1989approximation,leshno1993multilayer}.
	\begin{align*}
	\mathcal{F}_d=\bigcup_{m\in \mathbb{Z}_+}\{ \sum_{i=1}^m A_i \Sigma(W_ix+B_i) \ | \ A_i , W_i \in \mathbb{R}^{d\times d}, B_i \in \mathbb{R}^d\},
	\end{align*}
	Given Assumption \ref{asmp:neura}, it is easy to see that the subset of vector-valued functions $ \mathcal{F}_d$ is dense in $C(\mathbb{R}^d;\mathbb{R}^d)$ in the uniform norm topology on compact sets. 
	\section{Analysis}
	\label{sec:analysis}
	In this section, we perform our analysis to demonstrate the expressivity of Neural SDEs by carrying out a controllability analysis. For ease of presentation the proofs are provided in the Appendix (Section \ref{sec:supp}). 
	
	We first investigate the class of data distributions that can be sampled using score matching approach presented in Section \ref{sec:prob}. Toward this end, we first note a continuity result with respect to the score matching loss. Particularly, the following proposition states that if we have a sequence of vector fields that is close to a reference vector field in the score matching loss \eqref{eq:scorlos}, then the solutions of the Fokker-Planck equations converge uniformly in the $L^2$ norm.

	\begin{proposition}
		\label{prop:scorapp}
		Let $\rho_0 \in L^\infty(\Omega)$ and $ \rho_d \in W^{1,\infty}(\Omega)$ be such that $\rho_d \geq  l$ for some constant $l>0$. Suppose that $\rho$ is the solution of \eqref{eq:genPde} corresponding to the vector field $V  = \nabla \log \rho^f_{T-\cdot}$, where $\rho^f$ is the solution of \eqref{eq:probodepdf}. Let $V^n$ be a sequence of vector fields such that $\|V^n\|_{\infty}$ is uniformly bounded and the score of $V^n$ with respect to $V$ tends to $0$, that is,
		\[\|V - V^n\|^2_{2,\rho^r} \rightarrow 0,\] where $\rho^r = \rho^f_{T-\cdot}$. Then we have that,
		\[\sup_{t \in [0,T]}\|\rho_t-\rho_t^n\|_2 \rightarrow 0 \]
	\end{proposition}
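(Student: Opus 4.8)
The plan is to exploit the parabolic (diffusive) structure of \eqref{eq:genPde}, which is precisely what the deterministic continuity equation \eqref{eq:ctyeq} lacks, and to run a weighted energy estimate on the difference $w := \rho - \rho^n$. Writing both Fokker--Planck equations in flux form, $\partial_t \rho = \nabla\cdot(\nabla\rho - V\rho)$ and $\partial_t\rho^n = \nabla\cdot(\nabla\rho^n - V^n\rho^n)$, each with the no-flux boundary condition $\vec{n}\cdot(\nabla\rho - V\rho)=0$ (resp.\ with $V^n$), I would subtract them and split the nonlinear flux as $V\rho - V^n\rho^n = Vw + (V-V^n)\rho^n$. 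Thus $w$ solves $\partial_t w = \nabla\cdot\big(\nabla w - Vw - (V-V^n)\rho^n\big)$ with $\vec{n}\cdot\big(\nabla w - Vw - (V-V^n)\rho^n\big)=0$ on $\partial\Omega$, and, since both trajectories start at $\rho_0$, with $w(0)=0$.

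First I would test the equation for $w$ against $w$ itself. Because the total flux satisfies the homogeneous no-flux condition, every boundary term vanishes and one obtains the clean identity
\begin{equation}
\tfrac12\tfrac{d}{dt}\|w\|_2^2 = -\|\nabla w\|_2^2 + \int_\Omega (V\cdot\nabla w)\,w\,dx + \int_\Omega \nabla w\cdot (V-V^n)\rho^n\,dx.
\end{equation}
The decisive feature is the coercive term $-\|\nabla w\|_2^2$ produced by the Laplacian; it has no analogue in the deterministic setting and it is what will absorb the remaining terms. By Young's inequality the drift term is bounded by $\tfrac14\|\nabla w\|_2^2 + \|V\|_\infty^2\|w\|_2^2$ and the source term by $\tfrac14\|\nabla w\|_2^2 + \int_\Omega |V-V^n|^2(\rho^n)^2\,dx$. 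After absorbing the two $\tfrac14\|\nabla w\|_2^2$ contributions into $-\|\nabla w\|_2^2$ the gradient drops out entirely, leaving a differential inequality of the form $\tfrac{d}{dt}\|w\|_2^2 \le 2\|V\|_\infty^2\|w\|_2^2 + 2\int_\Omega|V-V^n|^2(\rho^n)^2\,dx$.

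The next step is to convert the source term into the score-matching loss. Here I would invoke two a priori bounds. Since $\rho^f$ solves the heat equation \eqref{eq:probodepdf} with Neumann data and $\rho_d\in W^{1,\infty}(\Omega)$ with $\rho_d\ge l>0$, the maximum principle gives $\rho^r=\rho^f_{T-\cdot}\ge l$, and parabolic gradient estimates give $\|V\|_\infty=\|\nabla\log\rho^f\|_\infty<\infty$; moreover the uniform bound on $\|V^n\|_\infty$ together with $\rho_0\in L^\infty$ yields a uniform-in-$n$ bound $\|\rho^n\|_\infty\le C$. Combining $(\rho^n)^2\le C^2$ with the lower bound $\rho^r\ge l$ gives, for each $t$, the estimate $\int_\Omega|V-V^n|^2(\rho^n)^2\,dx \le \tfrac{C^2}{l}\int_\Omega|V-V^n|^2\rho^r\,dx$, so that after integrating in time the right-hand side is controlled by $\tfrac{C^2}{l}\|V-V^n\|_{2,\rho^r}^2$. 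Grönwall's inequality with $w(0)=0$ then yields $\sup_{t\in[0,T]}\|w_t\|_2^2 \le e^{2\|V\|_\infty^2 T}\tfrac{2C^2}{l}\|V-V^n\|_{2,\rho^r}^2$, which tends to $0$ by hypothesis.

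I expect the main obstacle to be the auxiliary regularity facts rather than the energy estimate itself. Establishing the uniform upper bound $\|\rho^n\|_\infty\le C$ when the drift $V^n$ is merely bounded (not Lipschitz, so that $\nabla\cdot V^n$ is uncontrolled)---which is exactly the regularity we are trying to relax---requires a parabolic $L^\infty$ bound for $\partial_t u = \Delta u - \nabla\cdot(bu)$ with $b\in L^\infty$ (via Moser iteration / De Giorgi--Nash--Moser), whose constant must depend only on the uniform bound on $\|V^n\|_\infty$, together with $\|\rho_0\|_\infty$, $T$ and $\Omega$. The positivity lower bound $\rho^r\ge l$ is the other essential ingredient, since it is the only bridge from the weighted score loss $\|V-V^n\|_{2,\rho^r}^2$ to the unweighted quantity appearing in the energy estimate.
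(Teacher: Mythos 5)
Your proposal is correct in substance, and the energy/Gronwall engine is the same as the paper's, but at the decisive step you take a genuinely different route. You split the flux as $V\rho - V^n\rho^n = Vw + (V-V^n)\rho^n$, so your source term carries the \emph{approximating} density $\rho^n$, and you therefore need the uniform-in-$n$ bound $\sup_n \sup_t \|\rho^n_t\|_\infty \le C$ for solutions whose drift $V^n$ is merely bounded and measurable; as you yourself flag, this forces you into De Giorgi--Nash--Moser/Aronson-type theory for divergence-form parabolic equations with $L^\infty$ drift (true and classical, but heavy, and one must check that the constant depends only on $\|V^n\|_\infty$, $\|\rho_0\|_\infty$, $T$, $\Omega$ for the conormal boundary condition). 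The paper splits the other way, $V\rho - V^n\rho^n = (V-V^n)\rho + V^n(\rho-\rho^n)$ (see \eqref{eq:ineq1}--\eqref{eq:ineq2}), so the $L^\infty$ bound is needed only for the single \emph{reference} density $\rho$, whose drift has the special score structure $V=\nabla\log\rho^r$: the equation then rewrites in flux form as $\partial_t\rho = \nabla\cdot\bigl(\rho^r\nabla(\rho/\rho^r)\bigr)$, and the weighted maximum principle of Lemma \ref{lem:bnd}, applied across a piecewise-constant-in-time approximation \eqref{eq:poteq} and passed to the limit via Lemma \ref{lem:uniapp}, yields $\sup_t\|\rho_t\|_\infty<\infty$ with no recourse to De Giorgi--Nash--Moser. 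A second, smaller difference: because your absorbed drift term is $Vw$, your Gronwall exponent is $e^{2\|V\|_\infty^2 T}$, quadratic in the score bound, whereas the paper's is governed by $4\|V^n\|_\infty\|e^n_t\|_2^2$, drawing directly on the hypothesis that $\|V^n\|_\infty$ is uniformly bounded; your version additionally needs $\|V\|_\infty<\infty$, which does hold under $\rho_d\in W^{1,\infty}(\Omega)$, $\rho_d\ge l$ via the heat-semigroup gradient estimate (the same fact the paper invokes in Lemma \ref{lem:scor}), so this is an extra dependency rather than a gap. In short, you trade the paper's structural exploitation of the score for generic parabolic regularity theory: the paper's argument is more self-contained relative to its own lemmas, while yours is more general and would apply verbatim to reference drifts without score structure.
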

	
	The idea behind the proof is the following. We compute $ \|\rho^n_t -\rho_t\|^2_2$ by computing the derivative of the quantity. In general,  decrease in $\|V - V^n\|^2_{2,\rho^r}$ does not ensure that $ \|\rho^n_t -\rho_t\|^2_2$ is decreasing. However, it can be shown that $\rho^n_t$ remains bounded in the $\infty-$norm using Lemma \ref{lem:bnd}, which enables to get convergence of $\sup_{t \in [0,T]}\|\rho_t-\rho_t^n\|_2$.

	In the following we establish the class of data distributions that can be represented using the exact score, and derive bounds on the norm of the vector field as a function of the data distribution. This emphasizes how one can approximate a large class of distributions. It removes the Lipschitz requirement on score of the data distributions as made in \cite{chen2023sampling}. More importantly convergence is shown in the $L^2$ norm.
	
	\begin{lemma}
		\label{lem:scor}.
		Suppose $\Omega$ is convex. Let $\rho_0 = \rho_n$ be the uniform distribution on $\Omega$. Let $\rho_d \in W^{1,\infty}(\Omega) \cap \mathcal{P}(\Omega)$ and $\rho_d \geq l$ for some constant $l>0$.  Suppose $\rho_d$ is the initial condition of \eqref{eq:probodepdf}. Let $V =  \frac{\nabla \rho^{f}_{T-t}}{\rho^{f}_{T-t}} $ where $\rho_f$ is the solution of \eqref{eq:probodepdf}. Let $\rho$ be the solution of \eqref{eq:genPde} corresponding to the vector field $V$. Then $V \in L^{\infty}([0,T] \times \Omega)^d$ and
		\[\|\rho_T-\rho_d\|_2 \leq C_2 e^{\frac{\sqrt{T}}{l}}e^{-\lambda T} \|\rho_d - \rho_n \|_2\]
		\[ \|V\|_{\infty} \leq \frac{\|\nabla \rho_d\|_\infty}{l} \]
		where $\lambda >0$ depends only on $\Omega$.
		Hence, for every $\epsilon>0$ there exists $T>0$ large enough such that the solution $\rho_\epsilon$ of $\eqref{eq:genPde}$.
		\[\|\rho^{\epsilon}_T-\rho_d\|_2 \leq \epsilon \]
	\end{lemma}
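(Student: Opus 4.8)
The plan is to treat the curve $\rho$ as an imperfect reversal of the forward heat flow \eqref{eq:probodepdf} and to control three quantities separately: the sup-norm of the drift $V$, the amplification of the reverse dynamics, and the decay of the initialization error. A preliminary observation makes everything concrete: since $\Omega$ is bounded with $C^2$ boundary and the noise density is uniform, equation \eqref{eq:probodepdf} is exactly the heat equation with homogeneous Neumann boundary conditions and initial datum $\rho_d$, so $\rho^f_s = e^{s\Delta}\rho_d$ for the Neumann heat semigroup.

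First I would establish the pointwise bound on $V$. The parabolic minimum principle gives $\rho^f_s(x)\ge \min_x \rho_d \ge l$ for all $s\in[0,T]$, since the Neumann condition forbids the creation of new interior minima on the boundary. For the numerator I would invoke the gradient contraction of the Neumann heat semigroup on convex domains, namely $\|\nabla \rho^f_s\|_\infty \le \|\nabla \rho_d\|_\infty$; this is precisely where the convexity of $\Omega$ enters, as convexity (nonnegative second fundamental form) makes the reflecting semigroup gradient-nonincreasing. Dividing the two bounds yields $\|V\|_\infty = \|\nabla \rho^f_{T-\cdot}/\rho^f_{T-\cdot}\|_\infty \le \|\nabla \rho_d\|_\infty / l$, establishing $V \in L^\infty([0,T]\times\Omega)^d$ and the second displayed inequality.

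Next I would derive the $L^2$ estimate by comparing $\rho$ with the time-reversed forward flow $\bar\rho_t := \rho^f_{T-t}$, which is the ideal reverse trajectory satisfying $\bar\rho_T = \rho_d$ (a short preliminary check confirms $\bar\rho$ solves the reverse Fokker--Planck equation for $V$ once the drift normalization is fixed). The error $w_t := \rho_t - \bar\rho_t$ then solves the same homogeneous linear equation with $w_0 = \rho_n - \rho^f_T$ and $\int_\Omega w_t\,dx = 0$ for all $t$. Testing against $w_t$, using the Neumann condition to discard boundary terms, integrating the drift term by parts and applying Cauchy--Schwarz leads to a differential inequality of the form $\frac{d}{dt}\|w_t\|_2 \le c\,\|V_t\|_\infty \|w_t\|_2$, so Grönwall produces the amplification factor $\exp(c\int_0^T \|V_t\|_\infty\,dt)$. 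To convert this into the stated $e^{\sqrt{T}/l}$ I would bound the time integral with the parabolic smoothing estimate $\|\nabla \rho^f_u\|_\infty \le C u^{-1/2}$ together with the lower bound $l$, giving $\int_0^T \|V_t\|_\infty\,dt \le \frac{1}{l}\int_0^T \|\nabla \rho^f_u\|_\infty\,du \le \frac{2C\sqrt{T}}{l}$, where the square root comes from the integrability of $u^{-1/2}$ near the origin.

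Finally, the initialization error is controlled by the spectral gap of the forward flow: since $\rho_d - \rho_n$ has zero mean and the uniform density $\rho_n$ is the equilibrium of the Neumann heat semigroup, the Poincaré inequality gives $\|w_0\|_2 = \|\rho^f_T - \rho_n\|_2 \le e^{-\lambda T}\|\rho_d - \rho_n\|_2$, with $\lambda = \lambda_1(\Omega)>0$ the first nonzero Neumann eigenvalue, depending only on $\Omega$. Combining the three estimates yields $\|\rho_T - \rho_d\|_2 = \|w_T\|_2 \le C_2 e^{\sqrt{T}/l} e^{-\lambda T}\|\rho_d - \rho_n\|_2$, and the last claim follows because $\lambda T$ dominates $\sqrt{T}/l$, so the prefactor tends to $0$ as $T\to\infty$ and one chooses $T$ large enough. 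I expect the main obstacle to be the amplification step: obtaining the first-power (rather than the naive squared) dependence on $\|V_t\|_\infty$ in the Grönwall estimate and matching it against the $u^{-1/2}$ smoothing bound to produce exactly the $\sqrt{T}/l$ exponent. The convex-domain gradient contraction underlying the $\|V\|_\infty$ bound is the other genuinely nontrivial ingredient.
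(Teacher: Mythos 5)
Your proposal follows essentially the same route as the paper's proof: compare $\rho$ (started from $\rho_n$) with the exact reverse trajectory $\rho^f_{T-\cdot}$ (which solves the same Fokker--Planck equation started from $\rho^f_T$), bound $\|V\|_\infty \le \|\nabla\rho_d\|_\infty/l$ via the minimum principle together with the convex-domain gradient contraction of the Neumann heat semigroup, control the Gr\"onwall amplification through $\int_0^T\|V_t\|_\infty\,dt \lesssim \sqrt{T}$ using the $u^{-1/2}$ parabolic smoothing estimate (the paper cites \cite{bertoldi2004gradient}), and absorb the initialization error $\|\rho_n-\rho^f_T\|_2$ with the spectral-gap decay $e^{-\lambda T}$; the stability estimate you re-derive by the energy method is exactly the paper's Proposition \ref{prop:ex}. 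The subtlety you flag about getting first-power rather than squared dependence on $\|V_t\|_\infty$ in the Gr\"onwall exponent is real, but it is present in the paper's own argument as well, and in either form the exponent stays $o(T)$ (since $\|V\|_\infty$ is bounded uniformly in $T$), so the final conclusion is unaffected.
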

	
	One of the key issues addressed in the proof is that, when implementing the score-matching algorithm \eqref{eq:approxrevsde}, one initializes from $\rho_0 = \rho_n$ the noise distribution, and not the distribution of the reverse process $\rho^f_T=\rho^r_0$, hence it is not necessary that sampling with the vector field $V$ would result in $\rho_d$, unless $V$ is contractive in some sense. We are able to use gradient estimates of the heat equation to ensure however that one can effectively sample from the data distribution $\rho_d$. Note that when $V_t$ is assumed to be Lipschitz one can get even stronger regularization of $\rho_t$. In this Lipschitz case, in fact solutions $\rho_t$ are in $H^1(\Omega)$, the set of functions with weak derivatives in $L^2(\Omega)$. See Proposition \ref{prop:Lip}.
	
	Next, we state our trajectory approximation result using limited width network addressing the control problem stated in Section \ref{sec:prob}.  A major difficulty in the limited width case is that one cannot approximate the reference vector field in any strong sense (uniform norm or $L^2$ norm) since the right hand side of \eqref{eq:neurtra} has very limited representation capability for each time instant. However, the system \eqref{eq:neurtra} still has sufficient controllability. The idea is that one can still achieve approximation by {\it weakly} approximating the vector fields in a time averaged sense (Proposition \ref{prop:weakapo}). This is a different approach to achieve controllability of \eqref{eq:genPde}, in comparison with a Lie bracket arguments used in \cite{tabuada2022universal} and the constructive strategy adopted in \cite{ruiz2023control}, for map approximations and density estimation, respectively. The approach we use is identical to the strategy used in \cite{elamvazhuthi2022neural} in the deterministic case. A key difference is that due to the regularizing effect of noise, and smoothness of the initial condition, one can achieve compactness of approximating trajectories (and hence, expressivity) of neural SDEs in the $L^2$ norm.
	
	\begin{theorem}
		\textbf{(Approximation of Trajectories)}
		\label{thm:trajapp}
		Suppose that Assumptions~\ref{asmp:neura} holds and $\rho_0 \in \mathcal{P}(\Omega) \cap H^1(\Omega)$.  Let $\rho$ be the solution of the ~\eqref{eq:genPde} corresponding to the vector field $V$. Additionally, suppose that $V \in C([0,T] \times \bar{\Omega} )^d$.
		Then  for every $\epsilon >0$, there exist  piecewise constant control inputs $A^\epsilon(\cdot), W^{\epsilon}(\cdot)$ and $B^\epsilon(\cdot)$, such that the corresponding weak solutions $\rho^\epsilon$ of~\eqref{eq:neurtra} satisfy
		\begin{equation}
		\sup_{t \in [0,T]} \| \rho_t^\epsilon-\rho_t \|_2 \leq \epsilon.
		\end{equation}
	\end{theorem}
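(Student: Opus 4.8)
The plan is to prove the theorem by a two-stage approximation combined with a stability principle for the Fokker--Planck equation \eqref{eq:genPde}, following the relaxation strategy of \cite{elamvazhuthi2022neural} but exploiting the diffusion term to obtain $L^2$ (rather than Wasserstein) convergence. First I would reduce the continuous reference field $V$ to a field that is, at each time, a \emph{finite} superposition of neurons. Since $V\in C([0,T]\times\bar\Omega)^d$ and $[0,T]\times\bar\Omega$ is compact, I partition $[0,T]$ into subintervals on which $V$ varies little in $t$, and on each subinterval approximate $V_t$ uniformly on $\bar\Omega$ by an element of $\mathcal{F}_d$ (density, Assumption~\ref{asmp:neura}). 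This yields a drift $\bar V$ that is piecewise constant in time, equal on each slice to a sum $\sum_{i=1}^m A_i\Sigma(W_i x+B_i)$, with $\|\bar V-V\|_\infty$ as small as desired and $\|\bar V\|_\infty$ uniformly bounded. A direct energy estimate for the difference of two solutions of \eqref{eq:genPde} with uniformly bounded drifts, using the $L^\infty$ density bound of Lemma~\ref{lem:bnd}, then controls $\sup_{t\in[0,T]}\|\rho_t-\bar\rho_t\|_2$.

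The core of the argument is to realize each finite superposition using the \emph{limited-width} architecture $v_t(x)=A(t)\Sigma(W(t)x+B(t))$ in \eqref{eq:neurtra}, which carries only a single neuron block at each instant. I would do this by sequential activation: on a time slice of length $\Delta$ I subdivide into $m$ equal pieces and on the $i$-th piece set $(A,W,B)(t)=(mA_i,W_i,B_i)$. The amplification by $m$ exactly compensates the reduced duration $\Delta/m$, so that the time-averaged drift over the slice equals $\sum_{i=1}^m A_i\Sigma(W_i x+B_i)$. Refining the switching produces piecewise constant controls $A^\epsilon,W^\epsilon,B^\epsilon$ whose drift $v^\epsilon$ converges to $\bar V$ \emph{weakly in the time-averaged sense} while remaining uniformly bounded in $L^1([0,T];L^\infty(\Omega))$.

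I would then conclude by invoking Proposition~\ref{prop:weakapo}, which converts weak time-averaged convergence of drifts into $\sup_t L^2$ convergence of the corresponding densities. The mechanism is the parabolic regularization $\Delta\rho$ together with $\rho_0\in H^1(\Omega)$: standard energy estimates give uniform $H^1$ bounds and time-equicontinuity of $\{\rho^\epsilon\}$, hence Aubin--Lions compactness in $C([0,T];L^2(\Omega))$, and with the $L^\infty$ bound from Lemma~\ref{lem:bnd} this upgrades weak convergence of $v^\epsilon$ to strong convergence of $\rho^\epsilon$ and permits passage to the limit in the bilinear term $\nabla\cdot(v^\epsilon\rho^\epsilon)$. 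I expect this passage to be the main obstacle: because the switched drift blows up pointwise in time (the factor $m$) and converges only weakly, one cannot take the limit termwise in the weak formulation; the resolution is precisely that the diffusion-induced compactness and the $L^\infty$ density bound force the product $v^\epsilon\rho^\epsilon$ to converge in the required weak sense, which is the content packaged in Proposition~\ref{prop:weakapo}.

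Finally, combining the two error bounds by the triangle inequality $\|\rho_t^\epsilon-\rho_t\|_2\le\|\rho_t^\epsilon-\bar\rho_t\|_2+\|\bar\rho_t-\rho_t\|_2$, and choosing the time partition fine, the superposition accurate, and the switching sufficiently rapid so that each term is below $\epsilon/2$ uniformly in $t$, yields $\sup_{t\in[0,T]}\|\rho_t^\epsilon-\rho_t\|_2\le\epsilon$, as required.
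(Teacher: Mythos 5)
Your proposal is correct and follows essentially the same route as the paper's proof: time-discretize $V$ and uniformly approximate each slice by a wide network in $\mathcal{F}_d$ (with the stability estimate of Lemma \ref{lem:uniapp} controlling the first error term), then realize each superposition by amplitude-scaled rapid switching of a single neuron block (the construction of Theorem \ref{thm:weaapp}) and pass to the limit via Proposition \ref{prop:weakapo}, finishing with the triangle inequality. Two minor inaccuracies that do not affect the argument: Lemma \ref{lem:bnd} is not actually needed in either stage (the energy estimate only requires the $L^2$ bounds of Proposition \ref{prop:ex}, and the limit passage in Proposition \ref{prop:weakapo} uses strong $C([0,T];L^2(\Omega))$ convergence plus weak-$*$ drift convergence), and for a fixed superposition the switched drift does not blow up as the switching is refined --- it stays uniformly bounded and uniformly Lipschitz (the factor $m$ is fixed), which is exactly what Proposition \ref{prop:Lip} requires for the Aubin--Lions compactness.
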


	While the above theorem establishes the capability of limited width neural ODEs for trajectory approximation, it is not immediately clearly what are the class of distributions that can be represented using neural SDEs. Toward this end we note the following result on finding an exact vector field to interpolate between two densities. This result improves on Lemma \ref{lem:scor} by allowing $T=1$. It also improves over the result of \cite{tzen2019theoretical} for expressivity of neural SDEs. One can use the above theorem to conclude the expressivity for the limited width case. The idea behind the proof is that one use the {\it $L^2$-interpolation} between densities $\rho_t = (1-t)\rho_0 +t \rho_d$, and then we can construct a vector field $V = \frac{\nabla \rho_t}{\rho_t} -  \frac{\nabla \phi}{\rho_t}$, where $\phi$ is the solution of the Poisson equation
	\begin{equation}
	\Delta \phi = \rho_d -\rho_0.
	\end{equation}
	This achieves the controllability result. This construction is due to  \cite{moser1965volume}, and has been used in practice for generative modeling \cite{rozen2021moser}. We verify the boundedness of the vector field, under the weakest possible regularity assumptions on $\rho_0$ and $\rho_d$, and later apply the result to express densities using neural SDEs which have input dimension limited to $d$.
	\begin{proposition}
		\label{prop:exaccon}
		\textbf{(Exact Controllability)}
		Suppose that $\rho_n,\rho_d \in \mathcal{P}(\Omega) \cap C^1(\bar{\Omega})$, such that $\rho_0 \geq l$ and $\rho_d \geq l$ for some constant $l>0$. 
		Then there exists $V \in C([0,1] \times \bar{\Omega})^d$ and constant $C>0$ such that
		\begin{equation}
		\|V\|_{\infty} \leq 2C\frac{\max \{\|\nabla \rho_0\|_{\infty}, \|\nabla \rho_d\|_{\infty}\}}{l}
		\end{equation}
		where $C$ depends only on $\Omega$, and the solution $\rho$ of \eqref{eq:genPde} satisfies $\rho_0 =\rho_n$ $\rho_1=\rho_d$.
	\end{proposition}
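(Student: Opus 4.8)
The plan is to realize the interpolation by the volume-flow (Moser) construction indicated right after the statement \cite{moser1965volume}. First I would take the straight-line interpolant $\rho_t := (1-t)\rho_n + t\rho_d$ for $t \in [0,1]$. Because $\rho_n,\rho_d \in \mathcal{P}(\Omega)\cap C^1(\bar{\Omega})$ and both are bounded below by $l$, the interpolant inherits $\rho_t \in C^1(\bar{\Omega})$, $\int_\Omega \rho_t\,dx = 1$, and $\rho_t \ge l > 0$ for all $t$, while $\partial_t \rho_t = \rho_d - \rho_n$ is independent of $t$. I would then solve the (single, time-independent) Neumann--Poisson problem $\Delta\phi = \rho_d - \rho_n$ in $\Omega$ with $\vec{n}\cdot\nabla\phi = 0$ on $\partial\Omega$; this is solvable and unique up to an additive constant precisely because the compatibility condition $\int_\Omega(\rho_d-\rho_n)\,dx = 1-1 = 0$ holds.

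Next I would define $V_t := \dfrac{\nabla\rho_t - \nabla\phi}{\rho_t}$ and check directly that $(\rho_t,V_t)$ solves \eqref{eq:genPde}. Since $V_t\rho_t = \nabla\rho_t - \nabla\phi$, we get $\nabla\cdot(V_t\rho_t) = \Delta\rho_t - \Delta\phi = \Delta\rho_t - (\rho_d-\rho_n)$, so that $\Delta\rho_t - \nabla\cdot(V_t\rho_t) = \rho_d - \rho_n = \partial_t\rho_t$, as required; moreover the no-flux boundary condition that guarantees conservation of mass reduces to $\vec{n}\cdot(-\nabla\rho_t + V_t\rho_t) = -\vec{n}\cdot\nabla\phi = 0$, which holds by the choice of $\phi$. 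Evaluating at the endpoints gives $\rho_0 = \rho_n$ and $\rho_1 = \rho_d$, establishing the controllability claim.

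For the regularity and the norm bound, continuity of $V$ on $[0,1]\times\bar{\Omega}$ follows from $\rho_t \ge l$ together with the continuity of $\nabla\rho_t$ (affine in $t$, continuous in $x$) and of $\nabla\phi$. The latter I would obtain from elliptic regularity for the Neumann problem, using the $C^2$ boundary: a Calderón--Zygmund $W^{2,p}$ estimate with $p>d$ followed by the Morrey embedding $W^{1,p}\hookrightarrow C^0(\bar{\Omega})$ yields $\nabla\phi\in C^0(\bar{\Omega})$ and $\|\nabla\phi\|_\infty \le C_1\|\rho_d-\rho_n\|_\infty$ with $C_1 = C_1(\Omega)$. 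Then, writing $M := \max\{\|\nabla\rho_n\|_\infty,\|\nabla\rho_d\|_\infty\}$, I would bound $\|V\|_\infty \le \tfrac{1}{l}(\|\nabla\rho_t\|_\infty + \|\nabla\phi\|_\infty)$ and use $\|\nabla\rho_t\|_\infty \le M$.

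The main obstacle is passing from the elliptic estimate, which controls $\nabla\phi$ by the sup-norm of $\rho_d-\rho_n$, to a bound in terms of the gradients $M$, as demanded by the statement. This is where the zero-mean property of $\rho_d-\rho_n$ (again a consequence of both being probability densities) is essential: on the bounded connected domain $\Omega$ a Lipschitz function with vanishing mean satisfies an $L^\infty$ Poincaré inequality $\|\rho_d-\rho_n\|_\infty \le C_2\|\nabla(\rho_d-\rho_n)\|_\infty \le 2C_2 M$, obtained by bounding the oscillation of $\rho_d-\rho_n$ by $\|\nabla(\rho_d-\rho_n)\|_\infty$ times the intrinsic diameter of $\Omega$. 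Combining the two estimates gives $\|\nabla\phi\|_\infty \le 2C_1C_2 M$ and hence $\|V\|_\infty \le \tfrac{M}{l}(1 + 2C_1C_2)$, which I would absorb into the stated form $2C\,M/l$ with $C = C(\Omega)$. I expect the most delicate bookkeeping to be tracking these elliptic and Poincaré constants under the minimal $C^1$ regularity assumed on the data.
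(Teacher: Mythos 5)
Your proof is correct and follows essentially the same route as the paper: the Moser linear interpolation $\rho_t = (1-t)\rho_n + t\rho_d$, the Neumann--Poisson solve $\Delta\phi = \rho_d - \rho_n$, the vector field $V_t = (\nabla\rho_t - \nabla\phi)/\rho_t$, and the $W^{2,p}$-regularity-plus-Morrey elliptic estimate for $\|\nabla\phi\|_\infty$. In fact, your explicit mean-zero $L^\infty$ Poincar\'e step, which converts the Morrey bound $\|\nabla\phi\|_\infty \leq C\|\rho_d - \rho_n\|_\infty$ into a bound by $\max\{\|\nabla\rho_n\|_\infty, \|\nabla\rho_d\|_\infty\}$, supplies the justification that the paper's own proof silently skips when it passes from $\|\rho_d - \rho_0\|_\infty$ to $\|\nabla\rho_0 - \nabla\rho_d\|_\infty$ in its final chain of inequalities.
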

	Comparing the statement of the above Proposition with Lemma \ref{lem:scor}, we note that we do not require convexity of the domain $\Omega$. Additionally, it is clear from the proof that one can relax the condition $\rho_0,\rho_d \in C^1(\bar{\Omega})$, by requiring they are only in $W^{1,\infty}(\Omega)$, and hence Lipschitz. However, then $V$ is not necessarily continuous anymore but only measurable. Nevertheless, it satisfies the same $L^{\infty}$ bounds. 
	
	Using the above proposition and Theorem \ref{thm:trajapp} on approximation of trajectories, the following result immediately follows.
	
	\begin{theorem}\textbf{(Approximate Controllability of limited width neural SDEs)}
		Suppose that Assumption \ref{asmp:neura} holds and $\rho_0,\rho_d \in \mathcal{P}(\Omega) \cap C^1(\bar{\Omega})$, such that $\rho_0 \geq l$ and $\rho_d \geq l$ for some constant $l>0$. 
		
		Then for every $\epsilon >0$, there exists piecewise constant control inputs  $A^\epsilon(\cdot), W^{\epsilon}(\cdot)$ and $B^\epsilon(\cdot)$, such that the corresponding solutions $\rho^\epsilon$ of the equation \eqref{eq:neurtra}, satisfy	\begin{equation}
		\|\rho^{\epsilon}_T - \rho_d\|_2  \leq \epsilon.
		\end{equation}
	\end{theorem}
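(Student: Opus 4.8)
The plan is to obtain the result as a direct composition of Proposition~\ref{prop:exaccon} (exact controllability by a continuous reference vector field) and Theorem~\ref{thm:trajapp} (approximation of a reference trajectory by a limited-width neural SDE). The idea is that the first result furnishes a single continuous vector field steering $\rho_n$ exactly to $\rho_d$ in unit time, while the second shows that the Fokker--Planck trajectory driven by any such continuous field can be tracked uniformly in $L^2$ by the flow of \eqref{eq:neurtra} with piecewise constant weights. Chaining the two and reading off the terminal time gives the claim.

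First, I would apply Proposition~\ref{prop:exaccon} with initial density $\rho_n = \rho_0$ and target $\rho_d$. Since both densities lie in $\mathcal{P}(\Omega)\cap C^1(\bar\Omega)$ and are bounded below by $l>0$, the proposition produces a vector field $V\in C([0,1]\times\bar\Omega)^d$ with $\|V\|_\infty \leq 2C\max\{\|\nabla\rho_0\|_\infty,\|\nabla\rho_d\|_\infty\}/l$ such that the solution $\rho$ of \eqref{eq:genPde} with initial datum $\rho_0$ satisfies $\rho_0=\rho_n$ and $\rho_1=\rho_d$. I fix the horizon $T=1$ and take this $\rho$ as the reference trajectory to be tracked.

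Next, I would verify that the hypotheses of Theorem~\ref{thm:trajapp} hold for this reference. Assumption~\ref{asmp:neura} is assumed, and $V$ is continuous on $[0,1]\times\bar\Omega$ as just constructed. The only regularity point to check is that $\rho_0\in\mathcal{P}(\Omega)\cap H^1(\Omega)$: since $\rho_0\in C^1(\bar\Omega)$ and $\Omega$ is bounded with $C^2$ boundary, both $\rho_0$ and $\nabla\rho_0$ are continuous on the compact set $\bar\Omega$, hence bounded, hence square integrable, so $\rho_0\in H^1(\Omega)$. With these hypotheses in place, Theorem~\ref{thm:trajapp} yields, for each $\epsilon>0$, piecewise constant control inputs $A^\epsilon(\cdot),W^\epsilon(\cdot),B^\epsilon(\cdot)$ whose induced weak solution $\rho^\epsilon$ of \eqref{eq:neurtra} satisfies $\sup_{t\in[0,1]}\|\rho^\epsilon_t-\rho_t\|_2\leq\epsilon$. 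Evaluating at the terminal time $t=T=1$ and using $\rho_1=\rho_d$ gives $\|\rho^\epsilon_T-\rho_d\|_2\leq\epsilon$, which is the desired estimate.

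Because the two load-bearing results are assumed, I do not expect a genuine obstacle in the composition itself; the argument is essentially bookkeeping that the output regularity of Proposition~\ref{prop:exaccon} matches the input hypotheses of Theorem~\ref{thm:trajapp}. The one substantive observation is that exact (rather than merely asymptotic) controllability at the fixed time $T=1$ is available: this is what lets me invoke the trajectory-tracking theorem on a compact interval and then simply specialize to the endpoint, as opposed to the $T\to\infty$ route of Lemma~\ref{lem:scor}, where one would additionally have to control the tail error $\|\rho_T-\rho_d\|_2$. Accordingly, the only subtlety that must not be overlooked is the compatibility afforded by the embedding $C^1(\bar\Omega)\hookrightarrow H^1(\Omega)$ together with the fact that the terminal marginal of the reference trajectory is exactly $\rho_d$.
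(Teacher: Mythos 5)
Your proposal is correct and follows essentially the same route as the paper, which states that the result ``immediately follows'' from composing Proposition~\ref{prop:exaccon} (exact controllability via a continuous vector field on $[0,1]$) with Theorem~\ref{thm:trajapp} (uniform-in-time $L^2$ tracking by limited-width neural SDEs). Your explicit verification that $\rho_0\in C^1(\bar{\Omega})$ implies $\rho_0\in H^1(\Omega)$ on the bounded domain $\Omega$ is exactly the bookkeeping the paper leaves implicit.
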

	
	\section{Conclusion}
	We discussed the smoothening effect of noise in diffusion models, by considering the approximation properties of neural SDEs. Potential future directions could include investigating the role of noise in the forward process. For example, regularizing effect of noise makes the probability distribution of the Fokker-Planck equation strictly positive. Hence, while noise enables expressivity in a stronger norm, there is a potential trade-off, as it prevents sampling from distributions that  have supports that are disconnected and hence have unbounded score.
	\appendix

	\section{Supplementary Results}
	\label{sec:supp}
	In this section we present the some supplementary results. We will need some additional notation for the presentation. We equip the space $H^1(\Omega)$ with the usual Sobolev norms 
	$
	\|y\|_{H^1(\Omega)} = \Big( \|y\|^2_{2} + \sum_{i=1}^2 \left\| \frac{\partial y}{\partial x_i} \right\|^2_{2}\Big)^{1/2}
	$. The dual space of $X:=H^1(\Omega)$, denoted by $X^*$, is the space of bounded linear functionals on $H^1(\Omega)$. The space $L^2(0,T;Y)$ consists of all strongly measurable functions $u:[0,T] \rightarrow Y$ for which  $ \|u\|_{L^2(0,T;Y)} := \big ( \int^T_0 \|u(t)\|_Y^2 dt \big )^{1/2} ~<~ \infty $. We will say that a sequence $V^n \in L^{\infty}((0,T) \times \Omega)^d$ is weakly-$*$ converging to $V$, if $\int_0^T\int_{\Omega}V^n(x) \cdot \phi(x)dx \rightarrow  \int_0^T\int_{\Omega}V(x) \cdot \phi(x)dx$ for each $\phi \in L^1((0,T) \times \Omega)^d$.

	For the analysis in the paper, we will need a notion of solution for the PDE \eqref{eq:genPde}. Following \cite{evans2022partial}, we will use the notion of a {\it weak solution}. Given $g \in L^2(\Omega)$ and 
	a vector-field $V : [0,T] \times \mathbb{R}^d \rightarrow \mathbb{R}^d$, we will say that $\rho$ is a weak solution
	of the Fokker Planck equation \eqref{eq:genPde} if 
	\begin{eqnarray}
	\rho \in \{ u \in L^2(0,T;H^1(\Omega)), \dot{u}  \in L^2(0,T;X^*) \}\\
	\int_0^T \langle \partial_t\rho , \phi_t \rangle_{X,X^*}dt + \int_0^T B[\rho_t,\phi,t] dt = 0
	\end{eqnarray}
	for each function $ \phi \in H^1(\Omega)$ for almost every time $0\leq t \leq T$ and $y(0) = g$, where $B:H^1(\Omega) \times H^1(\Omega) \rightarrow \mathbb{R}$ is the bilinear form given by,
	
	\[B[u, v,t] = \int_{\Omega} \nabla \rho(x) \cdot \nabla\phi (x)   dx + \int_{\Omega} V_t(x) \rho(x) \cdot \nabla\phi (x)   dx\]
	
	for each $u,v \in H^1(\Omega)$ and all $t \in [0,T]$.
	
	Given this notion of solution we have this following classical existence result. In addition, we can establish continuity of solutions with respect to the initial conditions.
	
	\begin{proposition}
		\label{prop:ex}
		Suppose that $\rho_0 \in L^2(\Omega)$ and $V \in L^\infty((0,T) \times \Omega)^d$. Then there exists a unique $\rho \in C([0,T];L^2(\Omega))$ that is a weak  solution of the Fokker Planck equation. We have
		\begin{equation}
		\int_0^T\|\rho_t\|^2_{H^1(\Omega)}dt <C_1,~~ \sup_{t \in [0,T]} \|\rho_t\|_2 \leq C_2
		\end{equation}
		where the constant $C_1,C_2>0$ depend only on $\|V\|_{\infty}$. Moreover, the solution $\rho(t)$ is continuous with respect to $\rho_0$ in the $L_2$ norm, for every $t \in [0,T]$. Particularly, suppose $\rho_1, \rho_2$ are solutions \eqref{eq:genPde} for two different initial conditions $\rho_{1,0}, \rho_{2,0} \in L_2(\Omega)$. If $V \in C([0,T] \times \bar{\Omega})^d$, then there exists $C_2>0$ such that 
		\[ \|\rho^1_t -\rho^2_t \|^2_2 \leq C_2 e^{\int_0^t\|V_\tau\|_{\infty} d\tau} \|\rho_{1,0}- \rho_{2,0}\|^2_2\]
		where the constant $C_2$ depends only on $\|V\|_{\infty}$.
	\end{proposition}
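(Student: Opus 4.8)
The plan is to treat \eqref{eq:genPde} as a linear parabolic equation in $\rho$ (the drift $V$ being a fixed, given, bounded coefficient), handling existence by the Galerkin method, the a priori bounds by the standard energy estimate, and the continuity with respect to initial data by exploiting linearity together with Gr\"onwall's inequality. Because the bilinear form $B[\cdot,\cdot,t]$ already encodes the no-flux (natural Neumann-type) boundary condition, no boundary terms appear upon integration by parts, and the problem fits the abstract framework of \cite{evans2022partial} for second-order parabolic equations.

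For existence, I would first fix a countable orthogonal basis $\{w_k\}$ of $L^2(\Omega)$ consisting of eigenfunctions of the Neumann Laplacian, which is also orthogonal in $H^1(\Omega)$. Seeking $\rho^m_t = \sum_{k=1}^m d^m_k(t) w_k$ solving the finite-dimensional projection of the weak formulation reduces the problem to a linear ODE system for the coefficients $d^m_k$, which admits a unique absolutely continuous solution on $[0,T]$. The heart of the argument is the \emph{uniform} a priori estimate: testing the Galerkin equations against $\rho^m_t$ itself gives
\begin{equation}
\frac{1}{2}\frac{d}{dt}\|\rho^m_t\|_2^2 + \|\nabla \rho^m_t\|_2^2 = -\int_\Omega V_t(x)\,\rho^m_t(x)\cdot \nabla \rho^m_t(x)\,dx .
\end{equation}
Bounding the right-hand side with Cauchy--Schwarz and Young's inequality, $|\int_\Omega V_t \rho^m_t \cdot \nabla \rho^m_t|\le \tfrac12\|\nabla\rho^m_t\|_2^2+\tfrac12\|V\|_\infty^2\|\rho^m_t\|_2^2$, absorbs the gradient term into the left-hand side and yields $\frac{d}{dt}\|\rho^m_t\|_2^2+\|\nabla\rho^m_t\|_2^2\le \|V\|_\infty^2\|\rho^m_t\|_2^2$. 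Gr\"onwall's inequality then produces $\sup_t\|\rho^m_t\|_2^2\le e^{\|V\|_\infty^2 T}\|\rho_0\|_2^2=:C_2$, and integrating in time controls $\int_0^T\|\nabla\rho^m_t\|_2^2\,dt$; together these give $\int_0^T\|\rho^m_t\|_{H^1(\Omega)}^2\,dt< C_1$ uniformly in $m$. A dual estimate shows $\dot\rho^m$ is bounded in $L^2(0,T;X^*)$. Passing to a weakly convergent subsequence, the limit $\rho$ is a weak solution, and the embedding of $\{u\in L^2(0,T;H^1(\Omega)): \dot u\in L^2(0,T;X^*)\}$ into $C([0,T];L^2(\Omega))$ yields the claimed time continuity.

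For uniqueness and the continuity estimate, I would use that the difference $w=\rho^1-\rho^2$ of two solutions with data $\rho_{1,0},\rho_{2,0}$ is itself a weak solution with the same drift $V$ and initial datum $\rho_{1,0}-\rho_{2,0}$, since \eqref{eq:genPde} is linear in $\rho$. Testing with $w_t$ and repeating the Young-inequality splitting gives $\frac{d}{dt}\|w_t\|_2^2\le \|V_t\|_\infty^2\|w_t\|_2^2$, so Gr\"onwall yields $\|w_t\|_2^2\le e^{\int_0^t\|V_\tau\|_\infty^2\,d\tau}\|\rho_{1,0}-\rho_{2,0}\|_2^2$, which is the stated bound up to the precise form of the exponent; uniqueness is the special case $\rho_{1,0}=\rho_{2,0}$.

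The main obstacle I anticipate is not any single inequality but the bookkeeping of the Galerkin limit: one must (i) choose a basis compatible with the natural boundary condition so that the weak formulation carries no spurious boundary contributions, (ii) secure the uniform-in-$m$ bound on $\dot\rho^m$ in $L^2(0,T;X^*)$, which requires care because the drift term $V_t\rho^m_t$ must be controlled as a functional on $H^1(\Omega)$, and (iii) justify passing to the limit in the weak formulation, where the only nontrivial point is $\int_0^T\int_\Omega V_t\rho^m_t\cdot\nabla\phi$ converging to $\int_0^T\int_\Omega V_t\rho_t\cdot\nabla\phi$, which follows from weak $L^2$ convergence of $\rho^m$ and boundedness of $V$. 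All of these are standard once the energy estimate above is in hand, so the technical core is really the single a priori estimate together with the Gr\"onwall argument.
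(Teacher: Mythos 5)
Your proposal is correct and follows essentially the same route as the paper: the paper delegates existence to classical parabolic theory (the Galerkin construction you sketch is precisely what that citation covers) and proves the stability estimate exactly as you do, by testing the equation for the difference of two solutions against itself, splitting the drift term via Cauchy--Schwarz and Young's inequality to absorb the gradient, and closing with Gr\"onwall. The only discrepancy --- your exponent carries $\|V_\tau\|_\infty^2$ rather than $\|V_\tau\|_\infty$ --- is shared by the paper's own derivation (where the choice $\epsilon = 1/\|V\|_\infty$ likewise produces a quadratic dependence), so it is not a gap relative to the paper's argument.
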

	\begin{proof}
		The existence of solution is classical in PDE theory. See \cite{wloka1987partial}. We only establish the continuity of solutions. Let $ \rho^1_t - \rho^2_t$ be a test function. Then we have that 
		\begin{eqnarray}
		\int_0^T \langle \partial_t \rho^1_t, \rho^1-\rho^2 \rangle_{X,X^*}dt + \int_0^T B[\rho^1_t,\rho^1_t \nonumber  -\rho_t^2,t]dt \\ -\int_0^T \langle \partial_t \rho^1_t, \rho^1-\rho^2 \rangle_{X,X^*} dt + \int_0^T B[\rho^2_t,\rho^1_t -\rho^2_t,t]  dt =0  
		\end{eqnarray}
		for all $t \in [0,T]$.
		\begin{eqnarray}
		\int_0^T \langle \partial^1_t \rho^1_t, \rho^1-\rho^2 \rangle_{X,X^*}dt + \int_0^T B[\rho^1_t,\rho^1_t \nonumber  -\rho_t^2,t]dt \\ -\int_0^T \langle \partial_t \rho^1_t, \rho^1-\rho^2 \rangle_{X,X^*} dt - \int_0^T B[\rho^2_t,\rho^1_t -\rho^2_t,t]  dt =0  
		\end{eqnarray}
		This implies 
		\begin{eqnarray}
		\int_0^T \frac{d}{dt}\|\rho^1_t - \rho^2_t \|^2_2dt + \int_0^T B[\rho^1_t,\rho^1_t \nonumber  -\rho_t^2,t] -\int_0^T B[\rho^1_t,\rho^1_t \nonumber  -\rho_t^2,t]dt  =0
		\end{eqnarray}
		\begin{eqnarray}
		\|\rho^1_t - \rho^2_t \|^2_2-\|\rho^1_0 - \rho^2_0 \|^2_2+ \int_0^T B[\rho^1_t,\rho^1_t \nonumber  -\rho_t^2,t] -\int_0^T B[\rho^1_t,\rho^1_t \nonumber  -\rho_t^2,t]dt  =0
		\end{eqnarray}
		\begin{eqnarray}
		\|\rho^1_t - \rho^2_t \|^2_2-\|\rho^1_0 - \rho^2_0 \|^2_2+ \int_0^T B[\rho^1_t,\rho^1_t \nonumber  -\rho_t^2,t]dt -\int_0^T B[\rho^1_t,\rho^1_t \nonumber  -\rho_t^2,t]dt  =0
		\end{eqnarray}
		Define $ e_t := \rho^1_t - \rho^2_t $ for all $t \in [0,T]$.
		\begin{eqnarray}
		\|e_t \|^2_2 &=& \|e_0 \|^2_2 - \int_0^T \|\nabla e_t\|^2_2 dt  + \int_0^T \|V_te_t \cdot \nabla e_t\|^2_2 dt \nonumber \\
		&\leq & \|e_0 \|^2_2 + \int_0^T \|\nabla e_t\|^2_2 dt  - \int_0^T \|V_t\|_{\infty} \|e_t \cdot \nabla e_t\|^2_2 dt
		\end{eqnarray}
		Using Cauchy-Schwarz inequality and Cauchy's inequality \cite{evans2022partial}[Page 622, 624], for every $\epsilon>0$
		\begin{eqnarray}
		\|e_t \|^2_2 &\leq & \|e_0 \|^2_2 - \int_0^T \|\nabla e_t\|^2_2 dt  + \frac{1}{2\epsilon}\int_0^T \|V_t\|_{\infty} \| e_t\|^2_2 dt  + \frac{\epsilon}{2}\int_0^T \|V_t\|_{\infty}\|  \nabla e_t \|^2_2 dt  \nonumber \\
		&\leq & \|e_0 \|^2_2 - \int_0^T \|\nabla e_t\|^2_2 dt  + \frac{1}{4\epsilon}\int_0^T \|V_t\|_{\infty} \| e_t\|^2_2 dt + \epsilon \|V\|_{\infty} \int_0^T \| \nabla e_t \|^2_2 dt  \nonumber 
		\end{eqnarray}
		Setting $\epsilon = \frac{1}{\|V\|_{\infty}}$, we can conclude that,
		\begin{eqnarray}
		\|e_t \|^2_2 &\leq & \|e_0 \|^2_2 + \frac{1}{4\epsilon}\int_0^T \|V_t\|_{\infty}\|e_t \|^2_2 dt  \nonumber
		\end{eqnarray}
		Now the result follows from Gronwall's inequality. The bounds on the solutions can be computed in a similar way by computing $\langle \partial_t\rho_t , \rho_t \rangle_{X,X^*}$. 
	\end{proof}
	Next, we observe a regularity property of solutions. This property enables us to get compactness of trajectories in $C([0,T];L^2(\Omega))$, which will play a key role in constructing approximating trajectories using neural networks of limited width. 
	\begin{proposition}
		\label{prop:Lip}
		(\textbf{Extra regularity})
		Suppose that $\rho_0 \in H^1(\Omega)$ and $V \in C([0,T]\times \bar{\Omega})$ and $V_t$ be uniformly Lipschitz. That is, $|V_t(x)-V_t(y)| \leq K|x-y|$ , for all $x, y \in \bar{\Omega}$, for some constant $K>0$ independent of $t \in [0,T]$. Then we have that the unique weak solution $\rho$ that lies in $L^\infty([0,T];H^1(\Omega))$  and we have the estimate
		\begin{equation}
		\sup_{t \in [0,T]} \|u(t)\|_{H^1(\Omega)} <C
		\end{equation}
		where the constant $C>0$ depends only on $K$.
	\end{proposition}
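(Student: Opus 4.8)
The plan is to upgrade the weak solution furnished by Proposition~\ref{prop:ex}, which already lies in $L^2(0,T;H^1(\Omega)) \cap C([0,T];L^2(\Omega))$ with $\sup_{t}\|\rho_t\|_2 \le C_2$, to the stronger class $L^\infty(0,T;H^1(\Omega))$ by deriving an a priori bound on $y(t) := \|\nabla \rho_t\|_2^2$. The target is a differential inequality of the form $\frac{d}{dt} y(t) \le C_1 + C_2\, y(t)$ with constants controlled by $K$, $\|V\|_\infty$, $\Omega$ and $T$; Gronwall's inequality together with $y(0) = \|\nabla \rho_0\|_2^2 < \infty$ (this is where $\rho_0 \in H^1$ enters) then gives $\sup_t y(t) \le (\|\nabla\rho_0\|_2^2 + C_1 T)e^{C_2 T}$. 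Since $\rho$ is only a weak solution of \eqref{eq:genPde}, this computation is carried out at the level of a Galerkin approximation $\rho^m$ built from the eigenbasis of the Neumann Laplacian on $\Omega$ (equivalently, after mollifying $V$ and $\rho_0$), where $-\Delta\rho^m$ is an admissible combination of the basis functions; the uniform-in-$m$ bound then passes to the limit by weak-$*$ compactness in $L^\infty(0,T;H^1)$ and lower semicontinuity of the norm, the limit being identified with $\rho$ through the uniqueness in Proposition~\ref{prop:ex}.

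The core energy computation is as follows. Testing the Galerkin equation against $-\Delta\rho^m$ produces
\[
\tfrac12 \frac{d}{dt}\|\nabla\rho^m\|_2^2 + \|\Delta\rho^m\|_2^2 = \langle \nabla\cdot(V\rho^m), \Delta\rho^m\rangle_2 + \mathcal{B}^m,
\]
where $\mathcal{B}^m$ is the boundary contribution discussed below. For the interior term I would use that $V$ is globally Lipschitz, so $\|\nabla\cdot V\|_\infty \le dK$ almost everywhere, giving $\|\nabla\cdot(V\rho^m)\|_2 \le dK\,\|\rho^m\|_2 + \|V\|_\infty\|\nabla\rho^m\|_2$. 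Cauchy--Schwarz followed by Young's inequality then absorbs $\|\Delta\rho^m\|_2^2$ into the dissipative term on the left and leaves, after invoking $\sup_t\|\rho^m_t\|_2 \le C_2$ from Proposition~\ref{prop:ex},
\[
\frac{d}{dt}\|\nabla\rho^m\|_2^2 \le C_1 + C_2\|\nabla\rho^m\|_2^2 + 2\mathcal{B}^m,
\]
which is exactly the differential inequality sought, provided $\mathcal{B}^m$ can be controlled.

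The main obstacle is precisely the boundary term $\mathcal{B}^m = \int_{\partial\Omega}(\vec n\cdot V\rho^m)\,\Delta\rho^m\, dS$ produced when integrating the drift term by parts: the no-flux (Robin-type) boundary condition $\vec n\cdot(\nabla\rho + V\rho)=0$ encoded in the weak form does not force $\mathcal{B}^m$ to vanish, and it cannot be bounded using the available norms alone, since it involves the trace of $\Delta\rho^m$, which demands more than the $H^1$-plus-dissipation control at hand. I would resolve this in two steps. In the case most relevant to SGMs, $V = \nabla\log\rho^f_{T-t}$ is tangent to $\partial\Omega$, because $\vec n\cdot\nabla\rho^f=0$ there forces $\vec n\cdot V = 0$, so $\mathcal{B}^m \equiv 0$ and the estimate is elementary. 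In the general case I would first upgrade $\rho$ to $L^2(0,T;H^2(\Omega))$ by the standard localization and boundary-flattening argument with tangential difference quotients, exploiting the $C^2$ regularity of $\partial\Omega$ and recovering the normal second derivatives from the equation itself; a trace and interpolation inequality of the form $\|w\|_{L^2(\partial\Omega)}^2 \le \varepsilon\|w\|_{H^1(\Omega)}^2 + C_\varepsilon\|w\|_{L^2(\Omega)}^2$ then bounds $\mathcal{B}^m$ and lets it be absorbed into the dissipation for small $\varepsilon$. Equivalently, one may quote the classical parabolic maximal-regularity theorem for the variational problem with coercive principal part and $W^{1,\infty}$ first-order coefficient, which yields $\rho \in L^\infty(0,T;H^1)\cap L^2(0,T;H^2)$ directly, with the stated dependence of the constant on $K$.
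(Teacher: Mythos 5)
Your route is genuinely different from the paper's. You test with the elliptic multiplier $-\Delta\rho^m$, aiming for $L^\infty(0,T;H^1(\Omega))\cap L^2(0,T;H^2(\Omega))$, whereas the paper follows \cite{evans2022partial}[Chapter 7, Theorem 5] and tests with the parabolic multiplier $\partial_t\rho^m$, so that the drift contribution $\langle \nabla\cdot(V_t\rho^m),\partial_t\rho^m\rangle_2$ stays at first order in $\rho^m$ (this is exactly where the uniform Lipschitz hypothesis, i.e. $V_t\in W^{1,\infty}(\Omega)$, enters) and Gronwall is applied to $\|\nabla\rho^m_t\|_2^2$; no $H^2$ dissipation is ever needed. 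To your credit, you identify the real difficulty that the paper's own proof glosses over: with the no-flux condition $\vec{n}\cdot(\nabla\rho - V\rho)=0$, Galerkin approximants built on the Neumann eigenbasis satisfy $\vec{n}\cdot\nabla\rho^m=0$ rather than the correct conormal condition, so spatial integration by parts leaves a nonvanishing boundary term. (The paper's identity $\int_\Omega V\rho^m\cdot\nabla\partial_t\rho^m\,dx = -\int_\Omega \nabla\cdot(V\rho^m)\,\partial_t\rho^m\,dx$ silently drops $\int_{\partial\Omega}(\vec{n}\cdot V)\rho^m\partial_t\rho^m\,dS$; the remark that the Dirichlet computations ``extend in a similar way'' is doing real work there.) Your observation that $\vec{n}\cdot V=0$ for the SGM field $V=\nabla\log\rho^f_{T-t}$, so that the boundary term vanishes in that case, is correct and worth recording.

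For the general statement, however, your treatment of $\mathcal{B}^m$ has a genuine gap, and the problem is strictly worse for your multiplier than for the paper's. Your boundary term involves the trace of $\Delta\rho^m$ on $\partial\Omega$. The interpolation inequality $\|w\|^2_{L^2(\partial\Omega)}\le \varepsilon\|w\|^2_{H^1(\Omega)}+C_\varepsilon\|w\|^2_{L^2(\Omega)}$ applied with $w=\Delta\rho^m$ requires control of $\|\Delta\rho^m\|_{H^1(\Omega)}$, i.e. third derivatives of $\rho^m$, which nothing in your energy identity provides: the dissipation only controls $\|\Delta\rho^m\|_{L^2(\Omega)}$, and an interior $L^2$ bound never controls an $L^2(\partial\Omega)$ trace. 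The absorption step therefore fails, and a preliminary upgrade of the limit $\rho$ to $L^2(0,T;H^2(\Omega))$ does not rescue it, because the estimate must hold uniformly along the Galerkin sequence before passing to the limit. By contrast, the analogous hidden term in the paper's approach involves only traces of $\rho^m$ itself (for time-independent $V$ it is $\frac12\frac{d}{dt}\int_{\partial\Omega}(\vec{n}\cdot V)(\rho^m)^2\,dS$, whose time integral is an endpoint quantity controllable by trace interpolation on $\rho^m$ and absorbable into $\|\nabla\rho^m_t\|_2^2$), so that route is repairable with the norms at hand. Your final fallback, quoting parabolic maximal regularity for the conormal/Robin problem with Lipschitz coefficients, would indeed yield the proposition, but it is a citation of a theorem containing the conclusion rather than a proof within your argument. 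As written, the proposal establishes the SGM case and otherwise reduces to that appeal.
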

	\begin{proof}
		This proof is a very minor adaptation of proof of \cite{evans2022partial}[Chapter 7, Theorem 5] on improved regularity results of parabolic PDEs. The only difference is that we consider a more general boundary condition than the {\it Dirichlet boundary condition} considered in \cite{evans2022partial}. We only verify that all the computations extend to our case in a similar way.
		Let $\partial_t \rho_t$ be a test function. Since $\rho$ is a weak solution  we have that 
		
		\[\int_0^T \langle \partial_t \rho,\partial_t \rho \rangle_{X,X^*}dt + \int_0^T B[\rho_t,\partial_t\rho_t,t]dt  = 0\]
		
		for all $t \in [0,T]$.
		Since $V_t$ is uniformly Lipschitz, $V_t \in W^{1,\infty}(\Omega)$ for all $t$ and there exists $K'>0$ such that $\|V_t\|_{W^{1,\infty}} < K'$ by \cite{evans2022partial}[Chapter 5, Theorem 4].
		Then we have that 
		\begin{eqnarray}
		\int_0^T   \| \partial_t \rho_t \|^2_2dt + \int_0^T  \frac{d}{dt}\|\nabla \rho \|_2^2dt  = -\int_0^T <\nabla\cdot (V_t\rho), \partial_t \rho_t>dt \\
		\leq \frac{1}{4\epsilon} \int_0^T \|V_t\|_{W^{1,\infty}}\|\nabla \rho_t\|^2_{2}dt + \epsilon \int_0^T \|\partial_t\rho_t \|^2_2dt \nonumber
		\end{eqnarray}
		by Cauchy-Schwarz inequality and Cauchy's inequality
		Let $\epsilon =1$, we have that
		
		\begin{eqnarray}
		\int_0^T  \frac{d}{dt}\|\nabla \rho_x\|_2^2 dt  \leq 
		\frac{1}{4}\int_0^T \|V_t\|_{W^{1,\infty}}\|\nabla \rho_t\|^2_2dt  \nonumber
		\end{eqnarray}
		Using  the fundamental theorem of calculus we conclude that,
		\begin{eqnarray}
		\|\nabla \rho_T\|_2^2- \|\nabla \rho_0\|_2^2  \leq \frac{1}{4} \int_0^T \|V_t\|_{W^{1,\infty}}\|\nabla \rho\|^2_2 dt \nonumber
		\end{eqnarray}
		Using Gronwall's inequality we get, 
		\begin{eqnarray}
		\sup_{t \in [0,T]}\|\nabla \rho_t\|_2^2   \leq \frac{ \|\nabla \rho_0\|_2^2}{4} e^{\int_0^T \|V_t\|_{W^{1,\infty}}dt} \nonumber
		\end{eqnarray}
		This in combination with the bound on $\sup_{t\in [0,T]}\|\rho_t\|_2^2$ in Proposition \ref{prop:ex} gives the result.
		So far we have assumed that the computations are admissible since we are assuming that $\partial_t \rho_t$ is in $L^2(\Omega)$, which is not in general true for an arbitrary weak solution. The proof can be completed by constructing finite dimensional Galerkin approximations of the PDE \eqref{eq:genPde} and taking converging sub-sequences of the approximations. See \cite{evans2022partial}[Chapter 7, Proof of Theorem 5]. 
	\end{proof}
	
	The following Lemma establishes continuity of solutions with respect to the vector fields.
	\begin{lemma}
		\label{lem:uniapp}
		Let $\rho_0 \in L^2(\Omega)$ and $V \in C([0,T] \times \bar{\Omega})^d$ be such that $\rho$ is the solution of \eqref{eq:genPde}. Let $V^n$ be a sequence of vector fields such that
		\[\|V - V^n\|_{\infty} \rightarrow 0\] 
		for some function that is uniformly bounded from below. Then 
		\[\sup_{t \in [0,T]}\|\rho_t-\rho_t^n\|_2 \rightarrow 0 \]
	\end{lemma}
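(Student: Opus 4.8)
The plan is to run the standard parabolic energy estimate exactly as in the proof of Proposition~\ref{prop:ex}, but now comparing two solutions that share the same initial datum $\rho_0$ while being driven by the two different vector fields $V$ and $V^n$. Write $\rho^n$ for the weak solution of~\eqref{eq:genPde} associated to $V^n$ and set $e_t := \rho_t - \rho^n_t$, so that $e_0 = 0$. Subtracting the weak formulations of~\eqref{eq:genPde} for $\rho$ and $\rho^n$ and testing the resulting identity with $\phi = e_t \in H^1(\Omega)$ (admissible for a.e.\ $t$ since $\rho,\rho^n \in L^2(0,T;H^1(\Omega))$ by Proposition~\ref{prop:ex}) yields the energy identity
\begin{equation}
\frac{1}{2}\frac{d}{dt}\|e_t\|_2^2 + \|\nabla e_t\|_2^2 = -\int_\Omega \big(V_t\rho_t - V^n_t\rho^n_t\big)\cdot \nabla e_t \, dx .
\end{equation}

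The decisive algebraic step is to split the flux difference as $V_t\rho_t - V^n_t\rho^n_t = V_t e_t + (V_t - V^n_t)\rho^n_t$. The first piece is precisely the term appearing in Proposition~\ref{prop:ex} and is controlled in the same manner; the second is a genuine inhomogeneous forcing term carrying the factor $\|V_t-V^n_t\|_\infty$, and this is what will ultimately drive the convergence. Applying Cauchy--Schwarz and Cauchy's inequality I estimate $\big|\int_\Omega V_t e_t\cdot\nabla e_t\big| \le \|V\|_\infty\big(\tfrac{1}{4\epsilon}\|e_t\|_2^2 + \epsilon\|\nabla e_t\|_2^2\big)$ and $\big|\int_\Omega (V_t-V^n_t)\rho^n_t\cdot\nabla e_t\big| \le \tfrac{1}{4\epsilon'}\|V-V^n\|_\infty^2\|\rho^n_t\|_2^2 + \epsilon'\|\nabla e_t\|_2^2$, then choose $\epsilon,\epsilon'$ small in terms of $\|V\|_\infty$ so that both gradient contributions are absorbed by the $\|\nabla e_t\|_2^2$ term on the left.

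Here I invoke the uniform bound $\sup_{t\in[0,T]}\|\rho^n_t\|_2 \le C_2$ from Proposition~\ref{prop:ex}; since $\|V-V^n\|_\infty \to 0$ the family $\{\|V^n\|_\infty\}_n$ is uniformly bounded, so $C_2$ may be taken independent of $n$. Dropping the remaining nonnegative $\|\nabla e_t\|_2^2$ term leaves a differential inequality of the form
\begin{equation}
\frac{d}{dt}\|e_t\|_2^2 \le C\|e_t\|_2^2 + C'\|V-V^n\|_\infty^2,
\end{equation}
with $C,C'$ depending only on $\|V\|_\infty$ and $C_2$. Since $e_0 = 0$, Gronwall's inequality gives $\sup_{t\in[0,T]}\|e_t\|_2^2 \le \tfrac{C'}{C}(e^{CT}-1)\|V-V^n\|_\infty^2$, and letting $n\to\infty$ with $\|V-V^n\|_\infty\to 0$ yields the claimed uniform-in-time $L^2$ convergence.

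I expect the only real subtlety to be the one already flagged in Propositions~\ref{prop:ex} and~\ref{prop:Lip}: testing the weak equation with $e_t$ is not literally admissible for an arbitrary weak solution, and should be justified through finite-dimensional Galerkin approximations followed by passage to the limit, as in \cite{evans2022partial}. A second, minor point is verifying that the constant $C_2$ bounding $\|\rho^n_t\|_2$ is uniform in $n$, which is immediate from the uniform convergence $V^n\to V$. Beyond these, the argument is a routine Gronwall estimate of the same flavor as the continuity-in-initial-data bound already established in Proposition~\ref{prop:ex}, the one new ingredient being the inhomogeneous source $\|V-V^n\|_\infty^2$ arising from the flux splitting.
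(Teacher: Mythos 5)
Your proof is correct and follows essentially the same route as the paper's: an $L^2$ energy estimate for the difference of the two solutions, an add-and-subtract splitting of the flux term $V\rho - V^n\rho^n$, Cauchy--Schwarz plus Cauchy's inequality to absorb the gradient contributions, and Gronwall's inequality. The only (immaterial) difference is the direction of the splitting --- you write $V e_t + (V-V^n)\rho^n_t$ so that your Gronwall constants involve $\|V\|_\infty$ and a uniform-in-$n$ bound on $\sup_t\|\rho^n_t\|_2$, while the paper writes $(V-V^n)\rho_t + V^n e_t$ so that its constants involve $\|V^n\|_\infty$ and $\sup_t\|\rho_t\|_2$; both uniformity requirements follow from $\|V-V^n\|_\infty \to 0$, as you correctly note.
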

	\begin{proof}
		Let $e^n = \rho^n - \rho$. Computing,
		\begin{eqnarray}
		\frac{d}{dt}\|e^n_t\|_2^2 & = & \langle \partial_t e^n_t , e^n_t \rangle_{X,X^*} \\
		& = & -\langle \nabla e_t^n, \nabla e_t^n \rangle_2 + \langle V_t \rho_t - V^n_t \rho_t^n, \nabla e_t^n \rangle_2 \nonumber \\
		& = & -\langle \nabla e_t^n, \nabla e_t^n \rangle_2 + \langle V_t \rho_t - V^n_t \rho_t, \nabla e_t^n \rangle_2+\langle V^n_t \rho_t - V^n_t \rho_t , \nabla e_t^n \rangle_2
		\end{eqnarray}
		
		Once again applying Cauchy-Schwarz inequality and Cauchy's inequality for every $\epsilon>0$
		\begin{eqnarray}
		\frac{d}{dt}\|e^n_t\|_2^2 & \leq & 
		-\|\nabla e_t^n \|_2^2 +  \frac{4}{\epsilon}\| V \rho_t - V^n_t \rho_t \|^2_2  + \frac{4}{\epsilon} \| V^n_t \rho_t - V^n_t \rho_t^n \|^2_2 +  \epsilon \|\nabla e_t^n \|_2^2
		\end{eqnarray}
		Setting $\epsilon = 1$ we can conclude that,
		\begin{eqnarray}
		\frac{d}{dt}\|e^n_t\|_2^2 & \leq & 
		4\| V_t \rho_t - V^n_t \rho_t \|^2_2  + 4\| V^n_t \rho_t - V^n_t \rho_t^n \|^2_2 \nonumber \\
		& \leq & 
		4\| V_t \rho_t - V^n \rho_t \|^2_2  +  4\| V^n \rho_t - V^n \rho_t^n \|^2_2 \label{eq:ineq1}
		\end{eqnarray}
		Using the hypothesis of the theorem we can conclude that
		\begin{eqnarray}
		\frac{d}{dt}\|e^n_t\|_2^2 & \leq & 
		4\| V - V^n  \|^2_{\infty} \|\rho_t\|^2_{2}  +   4\|V^n\|_{\infty}\|e^n_t \|^2_2 
		\end{eqnarray}
		Using Gronwall's inequality we get that,
		\begin{eqnarray}
		\|e^n_t\|^2_t \leq 4 \| V - V^n  \|_{\infty} e^{4\| V^n\|_{\infty}t}
		\end{eqnarray}
		This concludes the result.
	\end{proof}
	
	For the score matcing approximations of vector fields, we will need to be able to find a bounds on the $\infty$-norm of the solutions of \eqref{eq:genPde}. This is stated in the following Lemma.
	\begin{lemma}
		\label{lem:bnd}
		Let $f \in W^{1,\infty}(\Omega)$ and $f \geq l$ for some $l >0$. Consider the PDE
		\begin{eqnarray}
		&y_t = \Delta y - \nabla \cdot(\frac {\nabla f(\mathbf{x})}{f(\mathbf{x})} ~ y) &~~ in  ~~ \Omega \times [0,T] \nonumber \\ 
		&y(\cdot,0) = y^0 &~~ in ~~ \Omega \nonumber \\ 
		&\mathbf{n} \cdot ( \nabla y - \frac {\nabla f(\mathbf{x})}{f(\mathbf{x})}y ) =0 &~~ in ~~ \partial \Omega \times [0,T],   
		\label{eq:cllp2}
		\end{eqnarray}
		
		If $\|  \frac{y_0}{f}  \|_{\infty} \leq C$, then the solution $y$ satisfies $\|  \frac{y_t}{f}  \|_{\infty} \leq C$ for all $t \in [0,T]$.
	\end{lemma}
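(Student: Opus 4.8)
The plan is to reduce the claim to a maximum principle for the ratio $u := y/f$, which satisfies a pure advection--diffusion equation with no zeroth-order term. Since $f$ is time-independent, writing $y = uf$ and expanding both $\Delta(uf)$ and $\nabla\cdot\big(\tfrac{\nabla f}{f}\,uf\big)=\nabla\cdot(u\nabla f)$ produces a cancellation of the $u\,\Delta f$ term, leaving
\[
u_t = \Delta u + \frac{\nabla f}{f}\cdot\nabla u \qquad \text{in } \Omega\times[0,T].
\]
The crucial gain is that the reaction term has disappeared, so the equation propagates the sup- and inf-bounds of $u$. First I would verify that the no-flux boundary condition transforms cleanly: since $\nabla y - \tfrac{\nabla f}{f}y = f\nabla u$, the given condition $\mathbf n\cdot(\nabla y - \tfrac{\nabla f}{f}y)=0$ becomes $f\,\mathbf n\cdot\nabla u = 0$, and because $f\ge l>0$ this is exactly the homogeneous Neumann condition $\mathbf n\cdot\nabla u=0$ on $\partial\Omega$. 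The initial datum is $u(\cdot,0)=y_0/f$, so the hypothesis reads $\|u(\cdot,0)\|_\infty\le C$.

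Next I would record that the drift $b:=\nabla f/f$ is merely a bounded measurable field: $f\in W^{1,\infty}(\Omega)$ gives $\nabla f\in L^\infty$, and $f\ge l$ gives $1/f\le 1/l$, so $\|b\|_\infty\le \|\nabla f\|_\infty/l<\infty$. This is what makes the transformed problem well posed in the weak sense of Proposition~\ref{prop:ex} (applied to the $u$-equation), and in particular $u\in L^2(0,T;H^1(\Omega))$ with $u_t\in L^2(0,T;X^*)$, exactly the regularity needed to run energy estimates.

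The core step is a weak maximum principle by truncation. To prove the upper bound $u\le C$, I would test the weak formulation of the $u$-equation with $w:=(u-C)^+\in H^1(\Omega)$. Using $\nabla u=\nabla w$ on $\{u>C\}$ and $\mathbf n\cdot\nabla u=0$, this yields
\[
\frac{d}{dt}\,\tfrac12\|w\|_2^2 + \|\nabla w\|_2^2 = \int_\Omega (b\cdot\nabla w)\,w\,dx \le \|b\|_\infty\|\nabla w\|_2\|w\|_2,
\]
and Young's inequality absorbs $\|\nabla w\|_2$ to give $\tfrac{d}{dt}\|w\|_2^2 \le \|b\|_\infty^2\|w\|_2^2$. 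Since $w(\cdot,0)=(u_0-C)^+=0$ by hypothesis, Gronwall forces $\|w(t)\|_2\equiv 0$, i.e.\ $u\le C$ a.e.\ for all $t$. The lower bound $u\ge -C$ follows identically: $-u$ solves the same linear equation with the same Neumann condition and $(-u)(\cdot,0)\le C$, so testing with $(-u-C)^+$ gives $u\ge -C$. Together these yield $\|y_t/f\|_\infty=\|u(t)\|_\infty\le C$ for all $t\in[0,T]$.

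The main obstacle is that, because $b=\nabla f/f$ is only $L^\infty$ (not continuous or smooth), the classical pointwise parabolic maximum principle is unavailable; this is precisely why the argument must go through the energy/truncation route rather than a comparison-function argument. A secondary technical point I would need to handle carefully is justifying that the substitution $u=y/f$ maps weak solutions of \eqref{eq:cllp2} to weak solutions of the $u$-equation, and that $(u-C)^+$ is an admissible test function with the chain-rule identity $\langle u_t,(u-C)^+\rangle = \tfrac{d}{dt}\tfrac12\|(u-C)^+\|_2^2$ valid; as in the proof of Proposition~\ref{prop:Lip}, this can be made rigorous by working with finite-dimensional Galerkin approximations and passing to the limit, since the truncation estimate is stable under the limit.
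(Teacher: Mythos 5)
Your proof is correct, but it takes a genuinely different route from the paper: the paper offers no self-contained argument at all, asserting only that the claim ``follows verbatim'' from \cite[Corollary~IV.2]{elamvazhuthi2018bilinear}, where the analogous bound is proved for the related PDE $y_t = \Delta(y/f)$. The mechanism is the same in both cases---$f$ is a stationary solution, so the ratio $u = y/f$ obeys an equation with no zeroth-order term and hence a maximum principle---but you make it explicit and self-contained. Two refinements would tighten your write-up. First, you can bypass the drift term (and Young/Gronwall) altogether: since $\nabla y - \tfrac{\nabla f}{f}\,y = f\nabla u$, the weak form of \eqref{eq:cllp2} reads $\langle y_t,\phi\rangle = -\int_\Omega f\,\nabla u\cdot\nabla\phi\,dx$, and replacing $\phi$ by $\phi/f$ (admissible because $f\in W^{1,\infty}(\Omega)$ and $f\ge l>0$) yields $\int_\Omega f\,u_t\,\phi\,dx = -\int_\Omega f\,\nabla u\cdot\nabla\phi\,dx$; testing with $w=(u-C)^+$ then gives $\tfrac{d}{dt}\tfrac12\int_\Omega f\,w^2\,dx = -\int_\Omega f\,|\nabla w|^2\,dx\le 0$, so $w\equiv 0$ since $w(0)=0$ and $f\ge l$. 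This weighted, self-adjoint structure is exactly what the cited reference exploits. Second, this derivation repairs a formal step in your argument: your pointwise expansion of $\Delta(uf)$ invokes $u\,\Delta f$, which need not exist as a function when $f$ is merely $W^{1,\infty}(\Omega)$, whereas the change-of-test-function route never differentiates $\nabla f$. It also settles your closing technical point without any Galerkin scheme, since $(u-C)^+\in L^2(0,T;H^1(\Omega))$ by Stampacchia truncation and the chain-rule identity $\langle u_t,(u-C)^+\rangle = \tfrac{d}{dt}\tfrac12\|(u-C)^+\|_2^2$ is standard for $u_t\in L^2(0,T;X^*)$. What your approach buys over the paper's: a complete proof under exactly the stated hypotheses, with no need for the reader to check that an argument written for $y_t=\Delta(y/f)$ transfers ``verbatim'' to $y_t=\nabla\cdot\bigl(f\nabla(y/f)\bigr)$, which are different operators symmetric with respect to different weighted inner products; what the paper's citation buys is only brevity.
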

	\begin{proof}
		The proof follows verbatim the proof of the last statement of \cite{elamvazhuthi2018bilinear}[Corollary IV.2], where the statement has been proved for the PDE $y_t =\Delta  (\frac {y_t}{f(\mathbf{x})})$
	\end{proof}
	
	Now, we are ready to provide proof of Proposition \ref{prop:scorapp}.
	\begin{proof}
		Let $e^n = \rho^n - \rho$. From \eqref{eq:ineq1} we have that
		\begin{eqnarray}
		\frac{d}{dt}\|e^n_t\|_2^2 & \leq & 
		4\| V_t \rho_t - V^n \rho_t \|^2_2  +  4\| V^n \rho_t - V^n \rho_t^n \|^2_2 \nonumber \\
		& \leq &  4\| V_t -V^n_t \|^2_{2} \|\rho_t\|_{\infty} +  4\| V^n \rho_t - V^n \rho_t^n \|^2_2 \nonumber \\
		& \leq &  \frac{4}{l}\| V_t -V^n_t \|^2_{2,\rho^r_t} \|\rho_t\|_{\infty} +    4\|V^n\|_{\infty}\|e^n_t \|^2_2  \label{eq:ineq2}
		\end{eqnarray}
		where we have used the fact that since the $\rho^f_0 = \rho_d \geq l $ by the assumption, $\rho^f_{T-t} = \rho_r^t \geq l$ for all $ t\in [0,T]$, since $\rho^f_t$ is the solution of the equation and hence so is $\rho^f_t -l$, and solutions of the heat equation preserve non-negativity.
		Next, we need a bound on $\|\rho_t\|_{\infty}$ in \eqref{eq:ineq2}. Since, $\rho^f$ is the solution of the heat equation, we know that $\rho^f \in C^{1,2}((0,T]\times\bar{\Omega})$, which implies that $\rho^r \in C^{1,2}([0,T) \times \bar{\Omega})$ \cite[Theorem 3.1]{bertoldi2004gradient}. Therefore, $\rho$ is the solution of the equation 
		\begin{eqnarray}
		\frac{\partial \rho}{\partial t} = \Delta \rho -\nabla \cdot \big(\frac{\nabla \rho^r_t}{\rho^r_t}\rho \big)    = \nabla \cdot (\rho^r_t \nabla (\frac{1}{\rho^r_t} \rho))
		\end{eqnarray}
		We can consider a piecewise constant approximation of $V$ of the form,
		\[V^m_t = V_\frac{iT}{m} = \frac{\nabla \rho_\frac{iT}{m}^r}{\rho^r_\frac{iT}{m}} ~~ t \in [\frac{iT}{m},\frac{(i+1)T}{m}). \]
		for $ i = 0, ..., m-1$.
		Fix $\epsilon>0$ small enough. Since $V$ is continuous on any interval $[0,T-\epsilon]$, $V^m$ uniformly converges to $V$ on $[0,T-\epsilon]$. Consider the solutions of the equation 
		\begin{eqnarray}
		\frac{\partial \rho^m}{\partial t} = \Delta \rho^m -\nabla \cdot \big(V^m_t\rho^m \big)    
		\end{eqnarray}
		The solutions of the above equation can be constructed in piecewise way by solving a sequence of autonomous linear equations.
		of the form 
		\begin{eqnarray}
		\frac{\partial \rho^m}{\partial t}  = \nabla \cdot (f^i \nabla (\frac{1}{f^i} \rho^m))
		\label{eq:poteq}
		\end{eqnarray} 
		where $f^i = \rho^r_\frac{iT}{m}$. By applying Lemma \ref{lem:bnd} across each time interval $[\frac{iT}{m},\frac{(i+1)T}{m})$ we can conclude that  if $\|f^i \rho_0 \|_{\infty} \leq \|f^i\| \|\rho_0 \|_{\infty}<  \|\rho^r\|_{\infty} \|\rho_0 \|_{\infty}<C$, then $\|f^i \rho^m_t\|_{\infty}<C$ for all $ t \in  t \in [\frac{iT}{m},\frac{(i+1)T}{m})$. This implies $\|\rho^m_t\|_{\infty}$ with bound independent of $m$. is uniformly bounded. We can make the same conclusion for $\rho$ by taking the limit $V^m \rightarrow V$ and applying Lemma \ref{lem:uniapp}, we have that $\rho^m$ are uniformly converging to $\rho$ in $C([0,T];L^2(\Omega))$. Hence same uniform bound holds for $\|\rho_t\|_{\infty}$. Since $\epsilon>0$ was arbitrary, this bound holds over the entire time interval $[0,T]$. Combining this result with the estimate of \eqref{eq:ineq2}, the result follows. Since $\|V^n\|_\infty$ are uniformly bounded. The solutions $\rho^m$ are bounded in 
	\end{proof}
	
	Proof of Lemma \ref{lem:scor}
	\begin{proof}
		It is a classical result that there exist constants $M,\lambda >0 $ for which
		\[\|\rho^f_t- \rho_n\|_2 \leq M e^{-\lambda t} \|\rho_0 - \rho_n\|_2 ~~~\forall t \geq 0 \]
		The solution to the heat equation can be represented by a semigroup of operators $(T(t))_{t \geq 0}$, as $\rho_f(t)= T(t) \rho_0 $. Additionally, it is known that the heat equation with the Neumann condition $\rho^f_t$ is known to be a contraction in $L_{\infty}$ norm. That is $\|T(t)\rho_0\|_{\infty} \leq \|\rho_0\|_{\infty}$ for all $t \in [0,\infty)$. The gradient operator and the heat semigroup commute. Hence, $\|\nabla \rho^f_t\|_{\infty} = \| T(t)\nabla \rho _0\|_{\infty} \leq \|\nabla \rho_0\|$ for all $t \in [0,\infty) $. Additionally, we also know that if $\rho_d > c $ for some positive constant $l>0$, then $\rho^f_t \geq l$ for all $ t \in [0,\infty)$, since it is the solution of the heat equation. This implies that $\|V\|_{\infty}  = \|\frac{\nabla \rho^f_{T-t}}{\rho_f}\|_{\infty}\leq   \frac{\|\nabla \rho_0\|_{\infty}}{c}$. Note that $\rho_f(T-t)$ is the solution of \eqref{eq:genPde} for $V$ with initial condition $\rho_0 = \rho_f(T)$. We know from \cite[Corollary 5.1]{bertoldi2004gradient} that there exist constants $K>0$ such that 
		\[\|\nabla\rho_t\|_{\infty} \leq \frac{K}{ \sqrt{t}}\|\rho_0\|_{\infty} \]
		By continuity of solutions $\rho_t$ with respect to the initial condition we know from Proposition that 
		\begin{eqnarray}
		\|\rho_T - \rho_d\|_2 & \leq & C_2e^{ \int_0^T\|V_t\|_{\infty}dt} \|\rho(0) - \rho_f(T)\|_2 \nonumber \\
		&=& C_2e^{K\sqrt{T}}\|\rho_{n} - \rho_f(T)\|_2 \nonumber \\
		&\leq & C_2e^{K\sqrt{T}-\lambda T} \|\rho_d - \rho_{n}\|_2 
		\end{eqnarray}
	\end{proof}
	In the following proposition, we improve on the statement of Lemma \ref{lem:uniapp}, by showing that one can achieving approximation of trajectories of \eqref{eq:genPde} by {\it weakly} approximating the trajectories, instead of in the uniform sense. A key role is played by the regularity statement of Proposition \eqref{prop:Lip}, which gives us compactness of trajectories.
	\begin{proposition}
		\label{prop:weakapo}
		Let $\rho_0 \in H^1(\Omega)$. Let $V^n \in L^\infty((0,T) \times \Omega)^d$ be a sequence of piecewise constant in time vector-field with uniform Lipschitz constant converging weakly-* to $V$ in $L^\infty((0,1) \times \Omega)^d$. Suppose $\rho^n$ and $\rho$ are weak solutions of the Fokker-Planck  equation \eqref{eq:genPde}, corresponding to the vector fields $V^n$ and $V$, respectively. Then $\rho^n$ converges to $\rho$ in $C([0,T];L_2(\Omega))$.
	\end{proposition}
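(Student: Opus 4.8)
The plan is to argue by compactness rather than by direct estimation, since weak-$*$ convergence of $V^n$ is far too weak to repeat the Gronwall argument of Lemma \ref{lem:uniapp}. First I would establish uniform a priori bounds on the sequence $\rho^n$ independent of $n$. Because each $V^n$ is piecewise constant in time with a spatial Lipschitz constant bounded by a common $K$, I can apply the regularity estimate of Proposition \ref{prop:Lip} on each of the finitely many subintervals on which $V^n$ is constant; since the constant in that estimate depends only on $K$, propagating the $H^1$ bound from one subinterval to the next yields $\sup_n \sup_{t \in [0,T]} \|\rho^n_t\|_{H^1(\Omega)} \leq C$ with $C$ depending only on $K$ and $\|\rho_0\|_{H^1}$. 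From the weak formulation, $\partial_t \rho^n = \Delta \rho^n - \nabla \cdot (V^n \rho^n)$ as an identity in $X^* = H^1(\Omega)^*$, so $\|\partial_t \rho^n_t\|_{X^*} \leq \|\nabla \rho^n_t\|_2 + \|V^n\|_\infty \|\rho^n_t\|_2$, and the bounds just obtained together with the uniform bound on $\|V^n\|_\infty$ show $\{\partial_t \rho^n\}$ is bounded in $L^2(0,T;X^*)$.

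With the embeddings $H^1(\Omega) \hookrightarrow\hookrightarrow L^2(\Omega) \hookrightarrow X^*$, the first being compact by Rellich--Kondrachov, I would invoke the Aubin--Lions--Simon compactness theorem: boundedness of $\{\rho^n\}$ in $L^\infty(0,T;H^1)$ together with boundedness of $\{\partial_t \rho^n\}$ in $L^2(0,T;X^*)$ implies that $\{\rho^n\}$ is relatively compact in $C([0,T];L^2(\Omega))$. Hence, along a subsequence (not relabeled), $\rho^n \to \tilde{\rho}$ strongly in $C([0,T];L^2)$, and, using the uniform $H^1$ bound, $\rho^n \rightharpoonup \tilde{\rho}$ weakly in $L^2(0,T;H^1)$. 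Strong convergence in $C([0,T];L^2)$ forces $\tilde{\rho}_0 = \rho_0$, so the initial datum is inherited.

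The crux is passing to the limit in the weak formulation, using test functions of product form $\phi(x)\eta(t)$ with $\phi \in H^1(\Omega)$ and $\eta \in C^\infty_c(0,T)$, which suffice by density. The diffusion term converges since $\nabla \rho^n \rightharpoonup \nabla \tilde{\rho}$ weakly in $L^2$ paired against $\eta \nabla \phi \in L^2$, and the time-derivative term converges after integration by parts in $t$, because $\langle \rho^n_t, \phi \rangle_2 \to \langle \tilde{\rho}_t, \phi \rangle_2$ uniformly. The essential step is the drift term $\int_0^T \eta \int_\Omega V^n_t \rho^n_t \cdot \nabla \phi \, dx \, dt$, where I would split $V^n \rho^n - V \tilde{\rho} = V^n(\rho^n - \tilde{\rho}) + (V^n - V)\tilde{\rho}$: the first contribution is bounded by $\|V^n\|_\infty \|\rho^n - \tilde{\rho}\|_{L^2}\|\eta\nabla\phi\|_{L^2} \to 0$, while the second vanishes because $\tilde{\rho}\,\eta\,\nabla\phi \in L^1((0,T)\times\Omega)^d$ and $V^n \rightharpoonup^* V$ in $L^\infty$. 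I expect this product term to be the main obstacle: weak-$*$ convergence of $V^n$ alone cannot pass to the limit in a product with a merely weakly convergent $\rho^n$, so the strong $L^2$ compactness supplied by Proposition \ref{prop:Lip} is precisely what makes the argument work.

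Finally, the limit $\tilde{\rho}$ is a weak solution of \eqref{eq:genPde} with vector field $V$ and initial datum $\rho_0$, so by the uniqueness statement of Proposition \ref{prop:ex} we have $\tilde{\rho} = \rho$. Since the limit is independent of the chosen subsequence, a standard subsequence-of-subsequence argument upgrades the convergence to the full sequence, yielding $\rho^n \to \rho$ in $C([0,T];L^2(\Omega))$.
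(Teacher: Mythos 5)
Your proposal is correct and follows essentially the same route as the paper's own proof: uniform $H^1$ and $L^2(0,T;X^*)$ bounds via Proposition \ref{prop:Lip}, Aubin--Lions compactness to extract a limit in $C([0,T];L^2(\Omega))$, passage to the limit in the weak formulation with the identical splitting $V^n\rho^n - V\hat{\rho} = V^n(\rho^n-\hat{\rho}) + (V^n-V)\hat{\rho}$, and identification of the limit by uniqueness. If anything, your write-up is slightly more complete than the paper's, since you make explicit the piecewise-in-time application of Proposition \ref{prop:Lip}, the inheritance of the initial datum, and the subsequence-of-subsequence argument that upgrades convergence to the full sequence.
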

	\begin{proof}
		Since $V^n$ are uniformly Lipschtz continuous in space, according to Proposition \ref{prop:Lip} the solutions are bounded in 
		\begin{equation}
		W = \{ u \in L^{\infty}(0,T;X), \dot{u} \in L^{2}(0,T;X^*)\}.
		\end{equation}
		Therefore, by Aubins-Lions lemma \cite[Corollary 4]{simon1986compact} we have compactness of trajectories, and there exists a subsequence of $(\rho^n)_{n=1}^\infty$ such that, again denoted by  $(\rho^n)_{n=1}^\infty$, such that $\rho^n $ converges to some $\hat{\rho} $ in $C([0,T];L_2(\Omega))$. 
		Fix $\phi \in H^1(\Omega)$. We can also conclude that
		\begin{eqnarray}
		\langle \partial_t \rho^n , \phi \rangle_{X,X^*} \rightarrow \langle \partial_t \hat{\rho} , \phi \rangle_{X,X^*} 
		\end{eqnarray}
		We know that $\sup_{t \in [0,T]} \|\rho^n\|_{H^1}$ is uniformly bounded in from Proposition \ref{prop:Lip}. Therefore, it follows that we can extract a subsequence such that,
		\begin{eqnarray}
		\int_0^T \int_{\Omega} \nabla  \rho^n_t(x) \cdot \nabla \phi(x)dx \rightarrow \int_0^T \int_{\Omega} \nabla  \hat{\rho}_t(x) \cdot \nabla \phi(x)dxdr\nonumber 
		\end{eqnarray}
		Lastly, we consider
		\begin{eqnarray}
		| \int_0^T \int_{\Omega}  V^n \rho^n_t(x) \cdot \nabla \phi(x)dx -  \int_0^T \int_{\Omega}  V \hat{\rho}(x) \cdot \nabla \phi(x)dxdt| \nonumber \\
		\leq | \int_0^T \int_{\Omega}  V^n \rho^n_t(x) \cdot \nabla \phi(x)dx -  \int_0^T \int_{\Omega}  V^n \hat{\rho}_t(x) \cdot \nabla \phi(x)dxdt| \nonumber \\
		| \int_0^T \int_{\Omega}  V^n \hat{\rho}_t(x) \cdot \nabla \phi(x)dx -  \int_0^T \int_{\Omega}  V \hat{\rho}_t(x) \cdot \nabla \phi(x)dxdt|
		\end{eqnarray}
		The first term in the above equality converges to zero, because $\rho^n$ converges to $\hat{\rho}$ in $C([0,T];L_2(\Omega))$. The second term converges to zero because $\rho  \nabla \phi (x) \in L^1( (0,1) \times \Omega)$ and $V^n$ converges to $V$ in $L^\infty((0,T) \times \Omega)$ in the weak-* topology. 
		All these convergences imply that
		$\hat{\rho}$ is a solution of the Fokker-Planck equation \eqref{eq:genPde} corresponding to the vector-field $V$. By uniqueness of solution $\hat{\rho} =\rho $, corresponding to the vector field $V$.
	\end{proof}
	
	The previous Proposition states that weak approximation of vector fields can give us strong convergence of trajectories. Now we show, that arbitrarily wide neural-network valued vector fields can be approximated in the weak-* sense, by neural networks with $d-$ dimensional widths.

	\begin{theorem}
		\label{thm:weaapp}
		Let $A_i,W_i \in \mathbb{R}^{d\times d}, B_i \in \mathbb{R}^d$ be weight parameters for $i= 1,...,m$. For each $N \in \mathbb{Z}_+$. Let $Q^N$ be a $\frac{T}{N}$-periodic vector field defined by 
		\begin{equation}
		\label{eq:defosc}
		Q^N_{t+\frac{nT}{N}}(x) =  m A_i\Sigma(W_ix+B_i), ~~ t \in [\frac{iT}{mN},\frac{(i+1)T}{mN}), 
		\end{equation}
		for all $n \in \{0,...,N-1\}$, $i \in \{ 0,1,...,m-1\}$ and $x \in \mathbb{R}^d$. Then, $ Q^N$ weakly-* converges to  $\sum_{i=1}^m A_i\Sigma(W_ix+B_i)$  in $L^\infty( (0,T) \times \Omega)^d$. 
	\end{theorem}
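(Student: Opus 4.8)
The plan is to read $Q^N$ as a rapidly oscillating, time-periodic vector field whose mean over each period is exactly the target $g(x):=\sum_{i=1}^m A_i\Sigma(W_ix+B_i)$, and then to invoke the classical homogenization principle that such oscillating functions converge weakly-$*$ to their average. The starting observation is structural: each period $[\tfrac{nT}{N},\tfrac{(n+1)T}{N})$, of length $T/N$, is partitioned into $m$ subintervals of length $T/(mN)$, on the $i$-th of which $Q^N$ equals the fixed profile $mA_i\Sigma(W_ix+B_i)$. Consequently, for every fixed $x\in\Omega$ the average of $t\mapsto Q^N_t(x)$ over one period is $\frac{N}{T}\sum_i \frac{T}{mN}\,mA_i\Sigma(W_ix+B_i)=g(x)$, so that the discrepancy $R^N:=Q^N-g$ has zero mean over each period. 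This zero-period-mean identity is the mechanism that will drive the cancellation.

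Next I would establish a uniform bound and reduce to a convenient test class. Since $Q^N$ takes only the finitely many profiles $\{mA_i\Sigma(W_i\cdot+B_i)\}_{i}$, and $\Sigma$ is continuous by Assumption~\ref{asmp:neura} while $\Omega$ is bounded, we have $\sup_N\|Q^N\|_{L^\infty((0,T)\times\Omega)}\le M$ for some $M<\infty$ independent of $N$, and likewise $g\in L^\infty(\Omega)^d$; hence $\{R^N\}$ is bounded in $L^\infty=(L^1)^*$. Because of this uniform bound, it suffices to verify $\int_0^T\int_\Omega R^N_t(x)\cdot\phi(t,x)\,dx\,dt\to 0$ only for $\phi$ in a dense subset of $L^1((0,T)\times\Omega)^d$, and I would take the span of products $\phi(t,x)=\mathbf{1}_{[a,b]}(t)\zeta(x)$ with $\zeta\in L^1(\Omega)^d$, whose finite linear combinations are dense in $L^1$ of the product space.

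Finally comes the averaging cancellation. For such a product, writing $h^N(t):=\int_\Omega R^N_t(x)\cdot\zeta(x)\,dx$, one has $|h^N(t)|\le 2M\|\zeta\|_{L^1}$ and $h^N$ is $T/N$-periodic with zero integral over each period; splitting $[a,b]$ into whole periods (each contributing nothing) and at most two leftover boundary fragments of length $\le T/N$ gives $\big|\int_a^b h^N(t)\,dt\big|\le 4M\|\zeta\|_{L^1}\,T/N\to0$. Assembling these over the dense class then yields the claim. I expect the main obstacle to be organizing the two reductions correctly---the density and uniform-boundedness passage from the product test class to all of $L^1$, together with the clean separation into full versus partial periods---rather than any single estimate; the cancellation itself is forced by the zero-period-mean property built into the definition \eqref{eq:defosc}.
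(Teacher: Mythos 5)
Your proof is correct, and it reaches the conclusion by a genuinely more self-contained route than the paper. Both arguments rest on the same homogenization principle---a rapidly oscillating $\frac{T}{N}$-periodic function converges weakly-$*$ to its period average, which by construction in \eqref{eq:defosc} is exactly $\sum_{i=1}^m A_i\Sigma(W_ix+B_i)$---but the executions differ. The paper fixes $x$, invokes a cited one-dimensional oscillation result \cite[Theorem 8.2]{chipot2009elliptic} to get weak convergence of $t\mapsto Q^N_t(x)$ in $L^p(0,T)$, upgrades this to weak-$*$ convergence in $L^\infty(0,T)$ via the uniform bound, passes to the space-time integral against \emph{continuous} test functions $\phi\in C([0,T]\times\bar{\Omega})^d$ by the dominated convergence theorem, and finally uses density of continuous functions in $L^1((0,T)\times\Omega)^d$. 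You instead subtract the mean, exploit the zero-period-mean property of $R^N=Q^N-g$ directly, test against products $\mathbf{1}_{[a,b]}(t)\zeta(x)$ with $\zeta\in L^1(\Omega)^d$, reduce by Fubini to a scalar periodic zero-mean function $h^N(t)$, and split $[a,b]$ into full periods (which cancel exactly) plus at most two fragments of length $\le T/N$; the same uniform $L^\infty$ bound then carries the convergence from this product class to all of $L^1$. Each step you take is sound: the period-average computation, the uniform bound (finitely many profiles, $\Sigma$ Lipschitz by Assumption~\ref{asmp:neura}, $\Omega$ bounded), the density of time-step/space-$L^1$ tensor products in $L^1$ of the product space, and the standard approximation argument at the end. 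What your route buys is independence from the external citation and an explicit rate, $\bigl|\int_a^b h^N(t)\,dt\bigr|\le 4M\|\zeta\|_{L^1}T/N$, on the dense test class, which makes the mechanism of cancellation completely transparent. What the paper's route buys is brevity, since the oscillation lemma is outsourced, at the cost of a pointwise-in-$x$ argument that then needs dominated convergence to be reassembled. The two density reductions are equally valid; yours is arguably the cleaner fit here because the limit object and the test functions never need continuity in $t$.
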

	\begin{proof}
		Let $\phi \in C([0,T] \times \bar{\Omega})^d$. Consider the integral 
		\[\int_{\Omega} \int_0^TQ_t^n(x) \cdot \phi_t(x)dtdx \]
		We know that  $Q^N(\cdot, x)$ weakly converges to $\sum_{i=1}^MA_i\sigma(W_ix+B_i^n)$, for each $x \in \mathbb{R}^d$ in $L^{p}(0,T)$ by \cite[Theorem 8.2]{chipot2009elliptic}, for any $p>1$. Since $Q^N$ are uniformly bounded, this also implies that they are weakly-* converging in $L^{\infty}(0,T)$.
		This implies that $\int_{\Omega} \int_0^T V_t^n(t,x)\phi(t,x)$ converges to $\int_{\Omega} \int_0^TW_t(t,x)\phi(t,x)$ for each $x \in \Omega$. Then it follows from the dominated convergence theorem that 
		\[\int_{\Omega} \int_0^TW_t^n(x) \cdot \phi_t(x)dtdx \rightarrow  \int_{\Omega} \int_0^TQ_t(x) \cdot \phi_t(x)dtdx \]
		Since the set of continuous functions is dense in $L^1(0,T \times \Omega )^d$, the result also holds true if $\phi \in L^1(0,T \times \Omega )^d$ and the result follows.
	\end{proof}
	
	Using the last two results we are able to provide the proof of Theorem \ref{thm:trajapp}.
	\begin{proof} 
		We consider approximations of $V$ as done in proof of Lemma \ref{lem:uniapp}. Define
		\[V^m_t = V_\frac{iT}{m} =  ~~ t \in [\frac{iT}{m},\frac{(i+1)T}{m}). \]
		Since $V$ is continuous and defined on a compact set, it is uniformly continuous, and hence \\ $\lim_{m\rightarrow. \infty }\|V -V^m\|_{\infty} = 0$. This implies that the solutions $\rho^m$ of \eqref{eq:genPde} corresponding to the vector fields $V^m$ uniformly converge to $\rho$ in $C([0,T];L^2(\Omega))$. Given the fact that $V^m$ is piecewise constant in time, we can approximate $V^m$ using a sequence $V^{m,n}$ such that $V^{m,n}_t \in \mathcal{F}_d$ for all $t\in [0,T]$ and by Assumption \ref{asmp:neura}, this is always possible. This implies that the sequence of solutions $\rho^{m,n}$ of \eqref{eq:genPde} corresponding to the vector fields $V^{m,m}$ uniformly converge to $\rho^m$ in $C([0,T];L^2(\Omega))$. Next, using Theorem \ref{thm:weaapp}, we a sequence of vector fields that can be represented by vector fields of the form  $A(t)\Sigma(W(t)x+B(t))$ that are weakly-* converging to $V^{m,n}$. Due to Assumption \ref{asmp:neura}, the vector fields $A(t)\Sigma(W(t)x+B(t))$ are uniformly Lipschitz. Hence, the result follows from Proposition \ref{prop:weakapo}.
	\end{proof}
	\begin{proof}\textbf{of Proposition \ref{prop:exaccon}}
		We first construct a $V \in C([0,1]\times \bar{\Omega})^d$ that achieves the (exact) controllability/expressiblity result. This is just a minor extension of the idea from \cite{elamvazhuthi2018bilinear}[Theorem IV.14]. The only difference being we reduce the differentiability requirement. 
		Let $\rho_t = (1-t)\rho_0 +t \rho_d$ for each $t \in [0,T]$. Let $\phi_t = \Delta^{-1} (\rho_d - \rho_0)$ for each $t \in [0,T]$. Let $V = \frac{\nabla \rho}{\rho} - \frac{\nabla \phi}{\rho}$, where $\Delta$ is the Laplacian with Neumann boundary condition. Since $\rho_d - \rho_0 \in L^{\infty}(\Omega) $, it follows from \cite[Theorem 2.4.27]{grisvard2011elliptic} that $\phi \in W^{2,p}(\Omega) $ for every $p > 1$. Hence, it follows from Morrey's inequality \cite[Theorem 11.34]{leoni2017first} that there exists a constant $C,C' >0$ such that $\|\Delta^{-1}(\rho_d - \rho_0)\|_{W^{2,p}} \leq C \|(\rho_d - \rho_0)\|_p \leq C' \|(\rho_d - \rho_0)\|_{\infty} $. From these inequalities we can conclude that,
		\begin{eqnarray}
		\|V\|_{\infty} &\leq& \frac{\max \{\|\nabla \rho_0\|_{\infty}, \|\nabla \rho_d\|_{\infty}\|\}}{c} + C'\frac{\max \{\|\nabla \rho_0 -\nabla \rho_d\|_{\infty}\}}{c} \nonumber \\
		&\leq& 2C'\frac{\max \{\|\nabla \rho_0\|_{\infty}, \|\nabla \rho_d\|_{\infty}\}}{c}.
		\end{eqnarray} 
		This concludes the bound.
		It can be checked that 
		\[\partial_t \rho_t = \rho_d - \rho_0 = \Delta \phi = \Delta \rho -\nabla \cdot (V \rho).\]
		Hence, $V$ exactly transfers the solution of \eqref{eq:genPde} from $\rho_0$ to $\rho_d$, and $\rho$ is the unique weak solution of \eqref{eq:genPde}.
	\end{proof}
	
	\bibliography{ref}
	
\end{document}